\DeclareSymbolFont{tildelow}{TS1}{cmr}{m}{n}
\DeclareMathSymbol{\tildelow}{0}{tildelow}{126}
\newtheorem{theorem}{Theorem}
\newtheorem{lemma}{Lemma}
\newtheorem{definition}{Definition}
\newcommand{\beq}{\begin{equation}}
\newcommand{\eeq}{\end{equation}}
\newcommand{\bea}{\begin{array}}
\newcommand{\ena}{\end{array}}
\newcommand{\bds}{\begin {itemize}}
\newcommand{\eds}{\end {itemize}}
\newcommand{\bdf}{\begin{definition}}
\newcommand{\blm}{\begin{lemma}}
\newcommand{\edf}{\end{definition}}
\newcommand{\elm}{\end{lemma}}
\newcommand{\bthm}{\begin{theorem}}
\newcommand{\ethm}{\end{theorem}}
\newcommand{\bprp}{\begin{prop}}
\newcommand{\eprp}{\end{prop}}
\newcommand{\bcl}{\begin{claim}}
\newcommand{\ecl}{\end{claim}}
\newcommand{\bcr}{\begin{coro}}
\newcommand{\ecr}{\end{coro}}
\newcommand{\bquest}{\begin{question}}
\newcommand{\equest}{\end{question}}
\newcommand{\larrow}{{\larrow}}
\def\urltilda{\kern -.15em\lower .7ex\hbox{\~{}}\kern .04em}
\begin{document}

\title{\LARGE Distributed Learning in Markovian Restless Bandits over \\Interference Graphs for Stable Spectrum Sharing}

\author{Liad Lea Didi, Kobi Cohen (\emph{Senior Member, IEEE})
    \thanks{Liad Lea Didi and Kobi Cohen are with the School of Electrical and Computer Engineering, Ben-Gurion University of the Negev, Beer Sheva 8410501 Israel. Email: liadeli@post.bgu.ac.il, yakovsec@bgu.ac.il}
    \thanks{This work was supported by the US-Israel Binational Science Foundation (grant No. 2024611)}
    \thanks{This work has been submitted to the IEEE for possible publication.
Copyright may be transferred without notice, after which this version may
no longer be accessible.}
	\vspace{-0.75cm}
}
\maketitle
\pagenumbering{arabic}

\begin{abstract}
\label{sec:abstract}
We study distributed learning for spectrum access and sharing among multiple cognitive communication entities, such as cells, subnetworks, or cognitive radio users (collectively referred to as cells), in communication-constrained wireless networks modeled by interference graphs. Our goal is to achieve a globally stable and interference-aware channel allocation. Stability is defined through a generalized Gale–Shapley multi-to-one matching, a well-established solution concept in wireless resource allocation. We consider wireless networks where $L$ cells share $S$ orthogonal channels and cannot simultaneously use the same channel as their neighbors. Each channel evolves as an unknown restless Markov process with cell-dependent rewards, making this the first work to establish global Gale–Shapley stability for channel allocation in a stochastic, temporally varying restless environment.

To address this challenge, we develop SMILE (Stable Multi-matching with Interference-aware LEarning), a communication-efficient distributed learning algorithm that integrates restless bandit learning with graph-constrained coordination. SMILE enables cells to distributedly balance exploration of unknown channels with exploitation of learned information. We prove that SMILE converges to the optimal stable allocation and achieves logarithmic regret relative to a genie with full knowledge of expected utilities. Simulations validate the theoretical guarantees and demonstrate SMILE’s robustness, scalability, and efficiency across diverse spectrum-sharing scenarios.
\vspace{0.1cm}
\end{abstract}

\begin{IEEEkeywords}
Spectrum access and sharing, distributed optimization and learning, communication-constrained wireless networks, restless multi-armed bandit (RMAB), Markovian fading channels. \vspace*{-0.2cm}
\end{IEEEkeywords}

\section{Introduction}
\label{sec:introduction}
The major growth in wireless services has intensified the demand for efficient spectrum utilization, pushing modern communication systems toward dynamic and distributed spectrum management. In spatial wireless networks, where spectrum availability fluctuates due to interference caused by transmissions from other cells, achieving high spectral efficiency while minimizing interference remains a central challenge. A natural way to capture the spatial structure of interference is through an interference graph, where nodes represent cells and edges connect pairs that cannot transmit concurrently over the same frequency band. Unlike traditional frequency reuse patterns that rely on static coordination, modern cognitive radio networks operate under stochastic and time-varying environments, requiring autonomous agents to learn and adapt their access decisions over time. This motivates the design of distributed learning algorithms capable of identifying stable and efficient sharing configurations under uncertainty.

In this work, we study a multi-cell spectrum access problem over $S$ orthogonal channels, where $L$ cells compete for transmission opportunities. Each channel is modeled as a Finite-State Markov Channel (FSMC) model, which is independent across channels but not necessarily identically distributed. The FSMC provides a tractable framework commonly employed to describe the temporal dynamics of wireless channels \cite{wang1995finite,sadeghi2008finite}, capturing phenomena such as primary user activity in hierarchical cognitive radio networks and interference from other users in open sharing environments like the ISM bands \cite{zhao2007survey,slamnik2020sharing}. During each time slot, the rate experienced by a cell on a given channel depends on the current FSMC state, while the transition probabilities of the Markov process remain unknown. Each cell can access only one channel per slot and observes the corresponding instantaneous state. To account for spatially localized interference, the network is modeled as an interference graph \cite{srikant2013communication}, in which nodes correspond to cells and edges connect pairs of cells that cannot transmit on the same channel simultaneously. When two neighboring cells select the same channel, a collision occurs, causing their rates to drop to zero. In contrast, non-neighboring cells can reuse the same channel without generating interference.

We evaluate system performance using the stable matching utility (see Section \ref{sec:problem}), a measure known to achieve strong efficiency in multichannel wireless systems \cite{leshem2012multichannel}. \cite{ami2024stable} recently extended this concept to interference graph models, where stability reflects both spatial reuse and local interference constraints. Within this framework, we define regret as the cumulative difference between the achieved and the optimal stable allocation under full knowledge of expected utilities. Our goal is to develop a distributed learning algorithm for spectrum access and sharing that operates under unknown channel dynamics while ensuring sublinear regret growth over time.

While globally stable allocations can be efficiently computed when channel statistics are known, both for fully interfering networks (i.e., a complete interference graph) \cite{leshem2012multichannel} and for spatial interference settings (i.e., general interference graph) \cite{ami2024stable}, a key challenge arises when these statistics are unknown and evolve dynamically. Previous works on multi-player multi-armed bandits (MABs) have addressed distributed learning in fully interfering settings, under rested Markovian dynamics where unobserved channels remain static \cite{kalathil2014decentralized, nayyar2016regret}. These algorithms can achieve near-logarithmic regret of $O (\log t)$ but frequently rely on significant communication between cells to implement auction-based strategies \cite{bertsekas1988auction}, limiting their scalability. Later extensions reduced the communication burden but did not provide formal regret bounds \cite{avner2016multi}. More recently, distributed learning for multi-cell spectrum access under restless Markovian channels was explored under full interference \cite{gafni2022distributed}, but extensions to general spatial interference settings, modeled by arbitrary interference graphs, have not been addressed, which is the focus of this work

Several fully distributed algorithms without communication have been proposed in a fully interfering setting, achieving near $O (\log t)$ regret, but only for i.i.d. channels \cite{bistritz2018distributed}. Other research on Markovian restless MAB (RMAB)\cite{Tekin_2012_Online, liu2012learning, 9288946, cohen2014restless, jiang2023online, raman2024global} considers simplified scenarios involving a single player, or cases where channels provide identical statistics to all cells (homogeneous systems), substantially easing the allocation and analysis. In contrast, our setting involves multi-player heterogeneous restless Markovian channels, in which each cell–channel pair evolves according to an independent Markov process with unknown transition dynamics in a spatial interfering setting. This results in a fundamentally different problem class, requiring multi-player distributed learning over heterogeneous restless dynamics inherent to spatial wireless networks.

\subsection{Main Results}


Our main contributions are summarized as follows:\vspace{0.1cm}

\noindent
\textbf{1) A new general model for distributed spectrum access over interference graphs with restless Markovian channels:} We consider a new practical and general model of the spectrum access and sharing problem in spatial wireless networks, where interference is captured by an arbitrary graph and channel conditions evolve according to heterogeneous restless Markov processes with unknown statistics. Unlike prior works that focus on fully interfering networks, i.i.d. channels, rested dynamics, or homogeneous systems, our model simultaneously captures spatial interference, multi-player competition, and cell-specific restless temporally correlated channel dynamics.

This setting reflects realistic communication network topologies and introduces significant challenges: learning cell-specific expected rates requires sustained exploration of each channel, while interference constraints limit feasible allocations. The problem is formalized as a stable multi-matching problem on interference graphs. This model leads to a formulation as distributed learning in Markovian RMAB, where each arm’s state evolution models the underlying restless channel dynamics, the arm rewards quantify the achievable utility (e.g., rate) on each channel, and the interference graph restricts which arms may be simultaneously selected by the players (cells).
\vspace{0.1cm}

\noindent
\textbf{2) A novel distributed learning algorithm:} We propose a novel distributed learning algorithm, termed Stable Multi-matching with Interference-aware LEarning (SMILE), to address this problem. SMILE carefully balances exploration and exploitation through local sensing and channel contention, requires no global coordination, and can be implemented using lightweight mechanisms such as distributed carrier sensing or local message exchanges between neighboring cells. Unlike prior approaches that either oversample all channels or rely on extensive parameter tuning, SMILE adapts exploration rates online per channel, reducing unnecessary sampling and enabling faster convergence.\vspace{0.1cm}

\noindent
\textbf{3) Rigorous theoretical analysis and performance evaluation:} We provide theoretical analysis showing that SMILE converges to the optimal stable allocation with logarithmic regret relative to an oracle with full knowledge of expected utilities. Compared to previous algorithms \cite{nayyar2016regret, bistritz2018distributed, javanmardi2021decentralized}, SMILE demonstrates improved scaling of regret with respect to both the number of cells and channels, while handling restless Markovian dynamics and arbitrary interference graph constraints. Extensive simulations validate the theoretical results and highlight the efficiency and scalability of SMILE in diverse spectrum sharing scenarios.

\subsection{Other Related Work}

Another significant line of research on multi-cell channel allocation has focused on game-theoretic models, congestion control, related optimization and game-theoretic frameworks (see \cite{han2005fair, leshem2006bargaining, menache2008rate, candogan2009competitive, menache2011network, law2012price, cohen2013game, wu2013fasa, singh2016combined, cohen2016distributedToN, cohen2017distributed, cao2018distributed, bistritz2018approximate, malachi2020queue} and references therein), as well as approaches addressing hidden channel states \cite{yemini2020restless} and classical graph coloring problems (see \cite{checco2017fast} and references therein). In addition to the difference in the learning aspect, the problem studied here differs in several aspects from standard coloring. First, standard graph coloring approaches may be infeasible in practical communication networks, where the number of available channels is limited and not all vertices (cells) can be assigned colors. Second, introducing stability in terms of preferences creates a fundamentally different problem structure, requiring the design of allocation mechanisms that account for both spatial interference and cell-specific utilities. 

Another line of research has investigated spectrum learning under unknown utilities. These approaches include distributed network utility maximization \cite{bistritz2018distributed, bistritz2021game}, stable matching via MAB \cite{avner2016multi}, RMAB \cite{Tekin_2012_Online, liu2012learning, gafni2020learning, gafni2022distributed, gafni2022learning}, and model-free reinforcement learning methods \cite{naparstek2018deep, liu2021dynamic, bokobza2023deep, paul2023multi, cohen2024sinr}. These works demonstrate the potential of learning-based methods for distributed spectrum access, particularly in scenarios where channel statistics are unknown and must be inferred online. However, most of these studies considered simplified or one-to-one assignment settings and do not provide provably stable strategies in the many-to-one allocation setting studied here, where multiple users may compete for the same channel under spatial interference constraints.

\section{Network Model and Problem Statement}
\label{sec:problem}

\subsection{Network Model}
We consider a spectrum access and sharing problem for $L$ cells (i.e., players). At each time, each cell can transmit on one of $S$ channels (i.e., arms) that constitute the spectrum. The set of cells is given by $\mathcal{L}= \{1, 2, \ldots, L\}$, and the set of channels is given by $\mathcal{S}= \{1, 2, \ldots, S\}$. When cell $\ell$ selects a channel $s \in \mathcal{S}$ that is available to it at time $t$, it receives a utility $r_{\ell,s}(t)$, which may represent the achievable transmission rate or a function thereof. We assume that ${r_{\ell,s}(t)}$ evolves as a stochastic process modeled by a discrete-time, irreducible, and aperiodic Markov chain over a finite state space $\mathcal{R}^{\ell,s}$. To capture the time-varying nature of wireless channels, we adopt the restless setting, where channel states evolve regardless of whether they are observed. In particular, we model each channel using an FSMC representation, obtained by quantizing the fading process into a finite number of rate intervals, each corresponding to a state of the Markov chain. The FSMC framework is widely used to characterize temporal channel dynamics \cite{wang1995finite,sadeghi2008finite}, capturing phenomena such as primary-user activity in hierarchical cognitive radio systems and interference in shared-spectrum environments like the ISM bands \cite{zhao2007survey,slamnik2020sharing}. Each channel evolves independently, though the associated Markov chains may follow different transition structures. The Markov chain describing the rate for each cell and channel has transition probability matrix $P^{\ell,s} \triangleq \left( p_{r,r'}^{\ell,s}  : r, r' \in \mathcal{R}^{\ell,s}\right)$ and has a well-defined steady state distribution $\vec{\pi}_{\ell,s}=\{ \pi_{\ell,s}^{r} \}_{r \in \mathcal{R}^{\ell,s}}$. The rate mean is given by $\mu_{\ell,s} = \sum_{r \in \mathcal{R}^{\ell,s}}{r \cdot \pi_{\ell,s}^{r}},$
and it is assumed to be unknown to the cells. These expected rates form an $L \times S$ matrix, denoted by $\textbf{M}$, with entries $[\textbf{M}]_{\ell, s} \triangleq \mu_{\ell, s}$, $\ell=1, \ldots, L$, $s=1, \ldots, S$. 

For each cell $\ell \in \mathcal{L}$, there is a set of neighbors $\mathcal{N}_\ell \subset \mathcal{L}$, consisting of cells with whom $\ell$ cannot transmit simultaneously on the same channel—doing so would result in a zero transmission rate. We denote by  $D_{\ell} = |\mathcal{N}_\ell|$  the size of cell $\ell$'s neighbor set. The channel $s$ is considered free for cell $\ell$ if all cells matched with $s$ are not neighbors of $\ell$. We denote by $x_{\ell,s}(t)$ the actual rate cell $\ell$ experiences by transmitting on channel $s$ at time $t$. If channel $s$ is free for cell $\ell$, then $x_{\ell,s}(t)=r_{\ell,s}(t)$. Otherwise, if a neighbor $\ell' \in \mathcal{N}_{\ell}$ also transmits on $s$ at the same time-slot, a collision occurs and $x_{\ell,s}(t)=0$.

\subsection{Notation}
\label{sec: Notations}
In the following, we define additional expressions and parameters used throughout the paper.
\begin{align*}
    \pi_{\min} \triangleq \min_{\ell \in \mathcal{L}, s \in \mathcal{S}, r \in \mathcal{R}^{\ell, s}} \pi _{\ell, s}^{r}\;\;,\;\;
    \hat{\pi}_{\ell, s}^{r} \triangleq \max \left\{ \pi_{\ell, s}^r, 1-\pi_{\ell, s}^r \right\}.
\end{align*}
\begin{align*}
    \hat{\pi}_{\max} \triangleq \max_{\ell \in \mathcal{L}, s \in \mathcal{S}, r \in \mathcal{R}^{\ell, s}} \left\{ \pi_{\ell, s}^r, 1-\pi_{\ell, s}^r \right\}.
\end{align*}
Also,
\begin{align*}
    r_{\max} \triangleq \max_{\ell \in \mathcal{L}, s \in \mathcal{S}, r \in \mathcal{R}^{\ell, s}} r\;\;,\;\;
    \overline{R}_{\max} \triangleq \max_{\ell \in \mathcal{L}, s \in \mathcal{S}} \sum_{r \in \mathcal{R}^{\ell, s}} r.
\end{align*}
{\small
\begin{align*}
    Q_{\max }\triangleq \max _{\ell, s}\;(\min _{ r \in \mathcal {R}^{\ell, s}}\,\,\pi _{\ell, s}^{r})^{-1} \sum \limits _{r\in {\mathcal {R}^{\ell, s}}}r\;\;,\;\;
    \mathcal{C}_{\max} \triangleq \max_{\ell \in \mathcal{L}, s \in \mathcal{S}} \left\{ |\mathcal{R}^{\ell, s}| \right\}.
\end{align*}}
We denote by $\lambda_{\ell, s}$ the second largest eigenvalue of $P^{\ell,s}$, and the maximum among them across all cells and channels by $\lambda_{\max} \triangleq \max_{\ell \in \mathcal{L}, s \in \mathcal{S}} \lambda_{\ell, s}$. Let $\overline{\lambda}_{\ell, s}= 1 - \lambda_{\ell, s}$ and $\overline{\lambda}_{\min}= 1 - \lambda_{\max}$. At last,
\begin{equation}
\label{I_kappa}
    \kappa \triangleq \frac{28\mathcal{C}_{\max}^2 \overline{R}_{\max}^2 \hat{\pi}_{\max}^2}{\overline{\lambda}_{\min}}\;\;,\;\;
I \triangleq \frac {7\epsilon ^{2} }{48(\overline{R}_{\max }+2)^{2} \cdot \kappa}.
\end{equation}

\subsection{Stable Multi-Matching Formulation}
\label{sec: Stable Allocation}

We adopt the stable matching utility as our performance metric, a criterion shown to yield strong efficiency in multichannel wireless networks \cite{leshem2012multichannel}. This notion originates from the classic stable matching (or stable marriage) problem (SMP) introduced by Gale and Shapley in 1962 \cite{gale1962college}. In particular, the SMP with a common utility formulation was applied in \cite{leshem2012multichannel} to spectrum access in cognitive networks, focusing on one-to-one cell–channel assignments. In this formulation, preferences follow utility (or mean rate) comparisons: cell $\ell$ prefers $s$ over $s'$ if $\mu_{\ell,s}>\mu_{\ell,s'}$, and similarly for channels. More recently, a generalized Gale-Shapley stability was introduced in \cite{ami2024stable} to extend this framework to many-to-one allocations, enabling channel reuse under interference graphs. Importantly, \cite{ami2024stable} also shows that computing the allocation $P:\mathcal{L}\rightarrow\mathcal{S}$ that maximizes the sum rate $\sum_{\ell=1}^L \mu_{\ell,P(\ell)}$ in this setting is NP-hard. Thus, beyond the empirical efficiency of stable solutions, stability emerges as a natural and tractable criterion when optimal rate maximization is computationally infeasible. The generalized Gale-Shapley stability is defined explicitly below.

\begin{definition}[Generalized Gale-Shapley Stable Allocations \cite{ami2024stable}]
An allocation $P: \mathcal{L}\rightarrow \mathcal{S}$ is stable if the following hold:\vspace{0.1cm}  

\noindent
1. (Assignment Validity) Each cell $\ell \in \mathcal{L}$ is assigned to exactly one channel $s \in \mathcal{S}$, and multiple cells may be assigned to the same channel.\vspace{0.1cm}  

\noindent
2. (Interference Feasibility) No two neighboring cells $\ell,\ell' \in \mathcal{N}$ are assigned to the same channel $s \in \mathcal{S}$.\vspace{0.1cm}  

\noindent
3. (Stability) For every cell $\ell_1 \in \mathcal{L}$ that prefers another channel $s \in \mathcal{S}$ over its current assignment $P(\ell_1)$, there exists a neighbor $\ell_2 \in \mathcal{N}_{\ell_1}$ already assigned to $s$ such that $s$ prefers $\ell_2$ over $\ell_1$, i.e., $\mu_{\ell_1,s} < \mu_{\ell_2,s}$.\vspace{0.1cm}
\end{definition}

\subsection{Objective}
\label{sec:Objective}

For each cell $\ell \in \mathcal{L}$, let $\phi_\ell(t)$ denote a selection rule that chooses a channel $s \in \mathcal{S}$ at time $t$ based on the observed history up to time $t-1$. 
A policy $\phi_\ell$ is the sequence of selection rules $\phi_\ell = (\phi_\ell(t),\, t = 1, 2, \ldots)$ governing the channel choices of cell $\ell$. 
Under a given policy, the expected cumulative rate (or utility) of all cells up to time $t$ is given by:
\begin{align}
    R(t) = \mathbb{E} \left[ \sum_{n=1}^t \sum_{\ell=1}^L x_{\ell, \phi_{\ell}(n)}(n) \right].
\end{align}

Our goal is to design a policy that converges to the generalized Gale--Shapley stable allocation defined in Section~\ref{sec: Stable Allocation}. 
Because the channel statistics are unknown, each learner (cell) must estimate the expected rates during operation. 
Let the value of the stable allocation be $\sum_{\ell=1}^L \mu_{\ell, P^*(\ell)}$. 
To assess performance, we use the notion of \emph{regret}, which quantifies the cumulative loss relative to an oracle with perfect knowledge of all expected rates. 
Formally, the regret of a policy $\phi = (\phi_{\ell},\, \ell = 1, 2, \ldots, L)$ is defined as
\begin{align}
\label{eq: regret}
    \mathcal{R}_{\phi}(t) 
    \triangleq 
    t \cdot \sum_{\ell=1}^L \mu_{\ell, P^*(\ell)}
    - 
    \mathbb{E}\!\left[ \sum_{n=1}^{t} \sum_{\ell=1}^L x_{\ell, \phi_{\ell}(n)}(n) \right].
\end{align}

The objective is to design a policy whose time-averaged regret vanishes asymptotically. To this end, we develop an algorithm that efficiently learns the unknown expected rates and converges to the generalized Gale-Shapley stable allocation, ensuring regret grows sublinearly with time. The slower the growth, the stronger the performance.

\section{The SMILE Algorithm}
\label{sec: The Algorithm}

In this section, we present the Stable Multi-matching with Interference-aware LEarning (SMILE) algorithm, which is designed to solve the problem in a distributed manner through three key phases: exploration, allocation, and exploitation. As the expected rates are unknown, the algorithm must gather observations to estimate them, which we refer to as the exploration phase. Using these estimates, the algorithm then seeks the solution to the stable allocation, defined in Section \ref{sec: Stable Allocation}, during the allocation phase, and finally applies this allocation to actual transmissions in the exploitation phase.

Exploration is necessary for accurate rate estimation and ultimately for reducing regret. To achieve reliable estimates, each cell must sample all channels. However, during exploration, cells deviate from the optimal stable allocation, which temporarily increases regret. Conversely, allocation and exploitation phases minimize regret by operating near-optimally with the available estimates. This creates a fundamental trade-off between exploration and exploitation that the algorithm must carefully balance.

To decide which phase to execute at a given time, we use a sampling-based condition. Specifically, let $T_{\ell,s}^{\mathrm{EE}}(t)$ denote the number of estimation samples that cell $\ell$ has collected on channel $s$ during the exploration sub-epochs (detailed in Section~\ref{ssec: exploration phase}) up to time $t$. A cell $\ell$ will enter an exploration epoch on channel $s$ if:
\begin{equation}
\label{eq: exploration condition}
T_{\ell,s}^{\mathrm{EE}}(t) < \tau_{\ell,s}(t),
\end{equation}
where the right-hand side is the exploration function defined in Section~\ref{The Exploration Function}, specific to each cell and channel.
Each cell $\ell$ verifies condition (\ref{eq: exploration condition}) across all channels $s \in \mathcal{S}$. If the condition holds for any channel, the cell enters the exploration phase on that specific channel (details in \ref{ssec: exploration phase}). Otherwise, the cell signals an interrupt message to all cells, indicating readiness to proceed to allocation. Once all cells have signaled, the allocation phase begins to determine the solution to the stable assignment (described in \ref{ssec: allocation phase}). Following that, the exploitation phase takes place, where cells transmit according to the assigned channels. After completing this phase, the entire process repeats. The pseudocode for SMILE is provided in Algorithm~1.
 
\subsection{The Exploration Phase}
\label{ssec: exploration phase}

The purpose of the exploration phase is to collect enough samples to estimate the expected transmission rate that each cell experiences on every channel. These estimates allow cell $\ell$ to identify the $D_{\ell} + 1$ best channels and derive the exploration function, $\tau_{\ell,s}(t)$ that determines when to transition between phases.
Although this phase temporarily increases the regret, since cells transmit on suboptimal channels, it is essential to reduce the long-term regret by enabling accurate rate estimation and eventually identifying the correct stable solution. Rates are estimated by averaging the observed rewards. However, under our problem setting, the environment follows a restless Markovian process, meaning the state may evolve even when the channel is not sampled. Therefore, to ensure that samples are sequential and informative, we divide each exploration phase into two sub-phases: Recovery Epoch (RE) and Estimation Epoch (EE).
In the initial RE, the cell attempts to recover the last observed state to create a continuous sampling process artificially. Once that is achieved, the second sub-phase (EE) begins, in which the cell collects samples for estimation. Let $ N_{\ell,s}^{ER}(t)$ denote the number of exploration phases that cell $\ell$ has conducted on channel $s$ up to time $t$. Let $\xi_{\ell, a}(N_{\ell,s}^{ER}(t))$ denote the last state observed during the $N_{\ell,s}^{ER}(t)$ -th exploration epoch. The first sub-epoch continues until the observed state equals $\xi_{\ell, a}(N_{\ell,s}^{ER}(t))$, resulting in a random-length phase. Then, the second sub-epoch runs for a fixed duration of $4^{N_{\ell,s}^{ER}(t)}$.
The length of the first sub-epoch increases geometrically to reduce channel switching, in a way that avoids regret increase caused by the transient effect, while ensuring the total time spent on exploration is logarithmic (see (\ref{proof ssec total exploration time})). At the end of each exploration phase, the estimated rate is computed as the sum of all observed states across all estimation sub-epochs (EE), denoted by \( S_{\ell,s}^{\mathrm{EE}}(t) \), divided by the total number of estimation samples collected so far during these epochs, $T_{\ell,s}^{\mathrm{EE}}(t)$. As a result, the estimated mean rate for cell $\ell$ on channel $s$ is given by:
$\hat{r}_{\ell,s}(t) = \frac{S_{\ell,s}^{\mathrm{EE}}(t)}{T_{\ell,s}^{\mathrm{EE}}(t)}.$
An illustration of the exploration process for cell $\ell$ on channel $s$ is shown in Fig.~\ref{fig: SMILE phases}.

\begin{figure}[!t]
\centering
\includegraphics[width=0.9\linewidth]{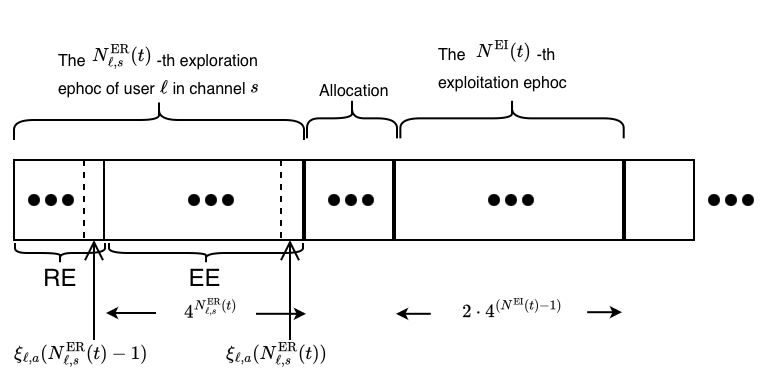}
\caption{An illustration of the SMILE algorithm phases for cell $\ell$.}
\label{fig: SMILE phases}
\end{figure}

\subsubsection{The Exploration Function}
\label{The Exploration Function}

To minimize the regret, the algorithm must both apply the stable allocation and collect sufficient exploration samples to ensure accurate rate estimation. Thus, we aim to define an exploration function that determines when cell $\ell$ should explore channel $s$.

Next, we introduce the \emph{Exploration Coefficient} used in the development of the exploration function. Note that to distinguish (with high probability) between two rate estimates (e.g., rate of cell $\ell$ on channels $s$ and $p$), at least $\frac{4 \kappa}{(\mu_{\ell, s}-\mu_{\ell, p})^2}$ samples are needed for each.
The primary goal of exploration is to allow each cell $\ell$ to correctly identify the $D_{\ell} + 1$ best channels, denoted by $\mathcal{S}_{\ell}$. Therefore, for each channel $s \in \mathcal{S}_{\ell}$, we define a deterministic row exploration coefficient:
\begin{equation}
E_{\ell, s}^{(R)} \triangleq \frac{4 \kappa}{\min \limits_{p \neq s}(\mu_{\ell, s}-\mu_{\ell, p})^2},
\end{equation}
and for channel $s \notin \mathcal{S}_{\ell}$:
\begin{equation}
E_{\ell, s}^{(R)} \triangleq \frac{4 \kappa}{(\mu_{\ell, s}-\min \limits_{p \in \mathcal{S}_{\ell}}\mu_{\ell, p})^2}.
\end{equation}
Note that $\min_{p \in \mathcal{S}_{\ell}} \mu_{\ell, p}$ corresponds to the $(D_{\ell} + 1)$-th largest mean rate that cell $\ell$ experiences across the channels.
Additionally, to determine whether a cell has a higher estimated rate than any neighbor that is also interested in a given channel, we introduce a deterministic column-exploration coefficient defined as:
\begin{equation}
E_{\ell, s}^{(C)} \triangleq \frac{4 \kappa}{\min \limits_{q \in \mathcal{V}_{\ell, s}}(\mu_{\ell, s}-\mu_{q, s})^2}.
\end{equation}
Here, $\mathcal{V}_{\ell, s}$ denotes the set of neighbors of cell $\ell$ that have attempted to transmit on channel $s$ (see Section~\ref{ssec: allocation phase}). Note that, by the design of the Allocation Phase, the estimated rates that cells from this set will experience on channel $s$ are known.
Combining both gives us the overall deterministic exploration coefficient:
\begin{equation}
E_{\ell, s} \triangleq \max\{E_{\ell, s}^{(R)}, E_{\ell, s}^{(C)}\}.
\end{equation}

Next, we introduce the \emph{Estimated Exploration Coefficient} used in the development of the exploration function. Since the true expectations are unknown, we replace them with current estimates in the implementation, such that for each channel $s \in \mathcal{S}_{\ell}$ we have:
\begin{equation}
\hat{E}_{\ell, s}^{(R)}(t) \triangleq \frac{4 \kappa}{\max\{\Delta_{\text{min}}^2, \min \limits_{p \neq s}(\hat{r}_{\ell,s}(t)-\hat{r}_{\ell,p}(t))^2 -\epsilon \}} ,
\end{equation}
and for channel $s \notin \mathcal{S}_{\ell}$ we have:
\begin{equation}
\hat{E}_{\ell, s}^{(R)}(t) \triangleq \frac{4 \kappa}{\max\{\Delta_{\text{min}}^2, (\hat{r}_{\ell,s}(t)-\min \limits_{p \in \mathcal{S}_{\ell}}\hat{r}_{\ell,p}(t))^2 -\epsilon \}}.
\end{equation}
For the column coefficient, we define:
\begin{equation}
\hat{E}_{\ell, s}^{(C)}(t) \triangleq \frac{4 \kappa}{\max\{ \Delta_{\text{min}}^2, \min \limits_{q \in \mathcal{V}_{\ell, s}} (\hat{r}_{\ell,s}(t)-\hat{r}_{q,s }(t))^2 -\epsilon \} }.
\end{equation}
Finally, the estimated exploration coefficient is given by:
\begin{equation}
\label{eq E estimation}
\hat{E}_{\ell, s}(t) \triangleq \max\{ \hat{E}_{\ell, s}^{(R)}(t), \hat{E}_{\ell, s}^{(C)}(t)\} ,
\end{equation}
where
\begin{equation}
\label{delta}
\Delta_{\text{min}} \triangleq \min\{ \min_{\ell \in \mathcal{L}} \Delta^{(R)}_{\ell} , \min_{s \in \mathcal{S}} \Delta^{(C)}_{s}\},
\end{equation}
\begin{equation}
\Delta^{(R)}_{\ell} \triangleq \min_{s \neq p} |\mu_{\ell, s}-\mu_{\ell, p}| , \; \Delta^{(C)}_{s} \triangleq \min_{\ell \in \mathcal{L} , q \in \mathcal{V}_{\ell,s}} |\mu_{\ell, s}-\mu_{q, s}|.
\end{equation}
We note that $\Delta_{\text{min}}$ and $\epsilon$ are only needed for purposes of analysis.

Next, we introduce the \emph{Exploration Function}. 
To guarantee the desired convergence rate, we require at least $2/I\cdot\log (t)$ samples for each cell and channel (see (\ref{proof eq 6})). As a result, the exploration function is given by:
\begin{equation}
\label{eq: exploration function}
\tau_{\ell,s}(t) \triangleq \max \{\hat{E}_{\ell, s}(t), \frac{2}{I} \} \cdot \log (t).
\end{equation}
As long as condition (\ref{eq: exploration condition}) holds, cell $\ell$ will continue to an exploration phase on channel $s$.

\subsection{The Access Phase (Allocation)}
\label{ssec: allocation phase}
The allocation phase in SMILE has two main objectives. The first is to compute the distributed generalized Gale–Shapley multi-to-one matching under the noisy learned rates. The second is to share limited information required for having the exploration coefficient. Note that since the true rates are unknown, the algorithm relies on the estimated rates obtained during the exploration phases to improve the learning process and converge to the global solution.
The allocation phase consists of at most $L \cdot S$ iterations (until all cells are assigned to specific channels). Each iteration is divided into two sub-phases, $S_1$ and $S_2$.
In each iteration, during sub-phase $S_1$, the unassigned cell with the highest average rate on a given channel attempts to transmit over that channel. If one (or more) of its neighbors is already assigned and transmitting on that channel, it implies that the neighbor’s rate is higher, and hence the neighbor is already assigned to that channel. In the subsequent sub-phase $S_2$, the previously unassigned cell that attempted to transmit in $S_1$ transmits again. Otherwise, the channel is free from any neighbor’s transmission, and the cell becomes assigned to it.
Formally, let $\hat{R}(t)$ denote the matrix of estimated rates at time $t$ (that is, $[\hat{R}(t)]_{\ell, s}$ represents the estimated mean rate of cell $\ell$ on channel $s$). Initially, this matrix is set according to the current mean-rate estimates.
Let us also define the set of neighbors of cell $\ell$ that, at some point during the allocation phase, transmitted simultaneously with $\ell$ on channel $s$ (during sub-epoch $S1$) and caused a collision (this set is required for computing the column exploration coefficient) by $\mathcal{V}_{\ell, s}$.
In each iteration, during sub-phase $S_1$, the maximal entry $(\ell, s)$ of $\hat{R}(t)$ is identified, and cell $\ell$ attempts to transmit on channel $s$. If one (or more) of its neighbors is already assigned to channel $s$, a collision occurs — meaning that the neighbor(s) was assigned earlier, and cell $\ell$ remains unassigned. In that case, cell $\ell$ stores the indices of the transmitting neighbor(s) on channel $s$ in $\mathcal{V}_{\ell, s}$ and their corresponding estimated rates.
In sub-phase $S_2$, cell $\ell$ transmits again on channel $s$, and any neighbor(s) already assigned to this channel, $q$, store index $\ell$ in $\mathcal{V}_{q, s}$ and its rate. Otherwise, if no neighbor is currently assigned to $s$, channel $s$ is considered free, and cell $\ell$ becomes assigned to it. Consequently, all related entries in row $\ell$ of $\hat{R}(t)$ are set to zero.
This process repeats until the entire matrix $\hat{R}(t)$ becomes zero, or equivalently, until all cells are assigned to their respective channels.

\subsection{Distributed Implementation and Illustrative Example}

The distributed coordination required by SMILE can be carried out in communication networks through two simple mechanisms:

\noindent
\textbf{1) Opportunistic carrier sensing multiple access (CSMA) over the neighborhood graph.} 
    This mechanism relies on opportunistic CSMA techniques proposed for distributed spectrum access in communication networks \cite{zhao2005opportunistic, cohen2010time, leshem2012multichannel, cohen2019time}. 
    Each cell applies a rate-dependent backoff timer on channel $s$, where higher estimated rates correspond to shorter backoff durations. 
    Consequently, the cell with the highest estimated rate on a given channel transmits first, while its neighbors detect the channel as busy and refrain from transmitting. 
    In this setting, Subphase~$S_1$ corresponds to each unassigned cell running opportunistic CSMA on its best remaining channel, and Subphase~$S_2$ enables assigned neighbors to record the identities of attempting cells for constructing $\mathcal{V}_{\ell,s}$.

\noindent
\textbf{2) Local message exchange between neighboring cells.}  
    In this implementation, each cell broadcasts a short message to its neighbors indicating its attempt to transmit on channel $s$. 
    Neighboring cells compare their estimated rates and suppress their own transmission attempts if their rate is lower, effectively yielding the channel to the higher-rate cell. 
    Under this approach, Subphase~$S_2$ is unnecessary, as the identities and rates of competing cells are already exchanged explicitly.\vspace{0.2cm}

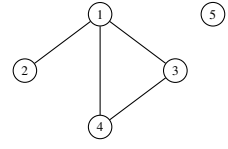
\begin{figure}[t]
    \centering
    \begin{tikzpicture}[
        scale=0.5, 
        every node/.style={circle, draw, minimum size=0.6cm, transform shape}
    ]
        \node (1) at (0,0) {1};
        \node (2) at (-2,-1.5) {2};
        \node (3) at (2,-1.5) {3};
        \node (4) at (0,-3) {4};
        \node (5) at (3,0) {5};
        
        \draw (1) -- (2);
        \draw (1) -- (3);
        \draw (1) -- (4);
        \draw (3) -- (4);
    \end{tikzpicture}
    \caption{Interference graph topology of 5 cells and 3 channels. 
    Edges indicate interference between cells.}
    \label{fig:topology}
\end{figure}

To illustrate the allocation phase, consider an example with five cells and three channels, under the neighborhood topology shown in Fig.~\ref{fig:topology}. The estimated mean rates at the beginning of the allocation phase are depicted in Fig.~\ref{fig:Alloc_exmp_rates}.
Fig.~\ref{fig:Alloc_exmp_iters} presents all the iterations of the current allocation phase, including its two subphases, S1 and S2. In the first iteration, the unassigned cell with the highest estimated rate is cell 4 on channel 3, which is free from transmissions by its neighbors (cells 1 and 3). Therefore, cell 4 is assigned to transmit on this channel and sets its rate to zero on all other channels in its corresponding row in Fig.~\ref{fig:Alloc_exmp_rates}. Its neighbors update its index and estimated rate.
In the second iteration, cell 3 attempts to transmit on channel 3. However, since this channel is already occupied by its neighbor (cell 4), a collision occurs and cell 3 is not assigned to this channel. During Subphase S2, cell 3 transmits on channel 3, while cell 4 records its index and rate.
In the third iteration, cell 2 is assigned to channel 3, as none of its neighbors (cell 1) transmits on it. In the fourth iteration, cell 5 is assigned to channel 2. In the fifth iteration, cell 1 is matched to channel 1. In the sixth iteration, cell 3 attempts to transmit on channel 1, but since it is already occupied by its neighbor (cell 1), it transmits in S2 instead.
Finally, in the last iteration, cell 3 is successfully assigned to channel 2, completing the allocation phase after seven iterations and nine time indices.

\begin{figure}[htbp]
    \centering
    \includegraphics[width=0.7\columnwidth]{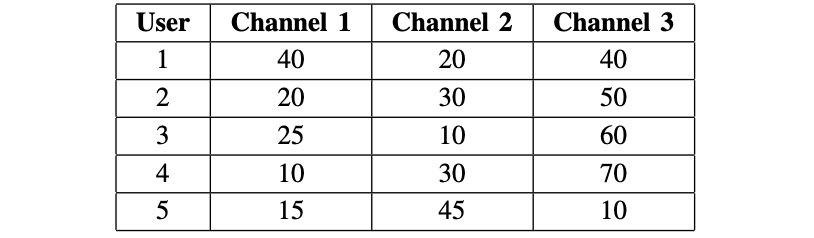}
    \caption{Estimated expected rate matrix [Mbps].}
    \label{fig:Alloc_exmp_rates}
\end{figure}

\begin{figure}[htbp]
    \centering
    \includegraphics[width=0.7\columnwidth]{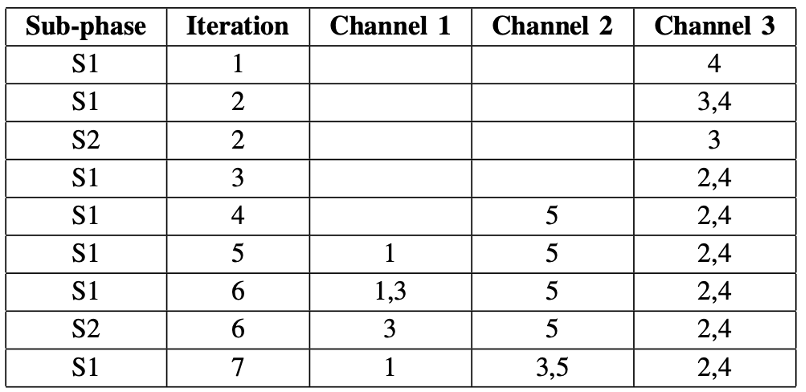}
    \caption{Allocation phase iterations.}
    \label{fig:Alloc_exmp_iters}
\end{figure}

\begin{algorithm}[h]
\scriptsize
\caption{SMILE Algorithm for cell $\ell$}
\begin{algorithmic}[1]

\STATE Set $\epsilon>0$ ; $t=0$, $N_{\ell,s}^{ER}=1$, $N^{EI}=0$ ; $T_{\ell,s}^{\mathrm{EE}}=0$; $S_{\ell,s}^{\mathrm{EE}}=0, \, \forall s=1\ldots S$
\STATE Initialization

\FOR{$s=1:S$}
  \STATE access channel $s$ ; denote observed state and rate as $x$ and $r_x$ , respectively, and set $\xi_{\ell, s}(N_{\ell,s}^{ER})=x$
  \STATE $t:=t+1$ ; $ T_{\ell,s}^{\mathrm{EE}} := T_{\ell,s}^{\mathrm{EE}}+1$ ; $ N_{\ell,s}^{ER} := N_{\ell,s}^{ER}+1$ ; $ S_{\ell,s}^{\mathrm{EE}} := S_{\ell,s}^{\mathrm{EE}}+r_x$
  \STATE $\hat{r}_{\ell,s} = \frac{S_{\ell,s}^{\mathrm{EE}}}{T_{\ell,s}^{\mathrm{EE}}}$
\ENDFOR

\WHILE{true}
  \FOR{$s=1:S$}
    \STATE estimate $E_{\ell, s}$ according to (\ref{eq E estimation})
  \ENDFOR

  \WHILE{condition (\ref{eq: exploration condition}) holds for some channel $s$}
    \STATE \textbf{Exploration Phase Algorithm}
    \STATE access channel $s$ ; denote observed state as $x$
    \WHILE{$x \neq \xi_{\ell, s}(N_{\ell,s}^{ER}-1)$}
      \STATE \textbf{RE Epoch:}
      \STATE $t:=t+1$
      \STATE access channel $s$ ; denote observed state and rate as $x$ and $r_x$
    \ENDWHILE

    \STATE $t:=t+1$ ; $ T_{\ell,s}^{\mathrm{EE}} := T_{\ell,s}^{\mathrm{EE}}+1$ ; $ S_{\ell,s}^{\mathrm{EE}} := S_{\ell,s}^{\mathrm{EE}}+r_x$

    \FOR{$n=1: 4^{N_{\ell,s}^{ER}-1}$}
      \STATE \textbf{EE Epoch:}
      \STATE access channel $s$ ; denote observed state and rate as $x$ and $r_x$
      \STATE $t:=t+1$ ; $ T_{\ell,s}^{\mathrm{EE}} := T_{\ell,s}^{\mathrm{EE}}+1$ ; $ S_{\ell,s}^{\mathrm{EE}} := S_{\ell,s}^{\mathrm{EE}}+r_x$
    \ENDFOR

    \STATE $N_{\ell,s}^{ER} := N_{\ell,s}^{ER}+1$
    \STATE $\hat{r}_{\ell,s} = \frac{S_{\ell,s}^{\mathrm{EE}}}{T_{\ell,s}^{\mathrm{EE}}}$
    \STATE $\xi_{\ell, s}(N_{\ell,s}^{ER})=x$
  \ENDWHILE

  \STATE Send an interrupt signal
  \IF{Interruption signal is not observed from all cells}
    \STATE goto step 10
  \ENDIF

  \STATE \textbf{Allocation Phase Algorithm}
  \STATE Start an allocation phase according to Sec.~\ref{ssec: allocation phase}. Denote assigned channel as $s_A$

  \STATE \textbf{Exploitation Phase Algorithm}
  \FOR{$n=1: 2\cdot 4^{(N^{EI}-1)}$}
    \IF{Interruption signal is observed}
      \STATE goto step 10
    \ENDIF
    \STATE Access channel $s_A$. Denote observed state and rate as $x$ and $r_x$
    \STATE $t:=t+1$
  \ENDFOR
  \STATE $N^{EI} := N^{EI}+1$
\ENDWHILE

\end{algorithmic}
\end{algorithm}

\subsection{The Stable Spectrum Sharing Phase (Exploitation)}
\label{ssec: exploitation phase}
The goal of this phase is to implement the stable allocation (identified at the end of the allocation phase) based on the rate estimates obtained during the estimation epochs within the exploration phases. Each cell transmits throughout the entire phase on the channel to which it was matched during the allocation phase. As the estimated rates converge to the true mean rates, the resulting stable allocation asymptotically approaches the optimal solution (as if the mean rates were fully known), thereby reducing the overall regret.
To ensure that the total durations of the exploration and allocation phases remain logarithmic in time, we execute each exploitation phase for a period of $2\cdot 4^{(N^{EI}(t)-1)}$, where $N^{EI}(t)$ denotes the number of exploitation phases completed up to time $t$.

\section{Theoretical Regret Analysis}
\label{sec:performance}

To evaluate the learning efficiency of SMILE, we analyze its regret relative to the genie-optimal stable allocation. In RMAB problems, sublinear regret is essential for ensuring asymptotic optimality. The following theorem establishes the regret bound achieved by SMILE.

\begin{theorem}
\label{theorem 1}
    Assuming that the proposed SMILE algorithm is implemented and that the assumptions on the system model in Section~\ref{sec:problem} hold. Then, the regret at time $t$ is upper bounded by:
    {\small
    \begin{align} 
    \label{eq: regret bound}
    \mathcal {R}(t)\leq& Q_{\max } \cdot \left({\sum \limits _{\ell=1}^{L} \sum \limits _{s=1}^{S} (\lfloor \log _{4}(3\mathcal{E}_{\ell, s}\log (t)+1) \rfloor +1) }\right) 
    \notag \\ 
    &+\, \sum \limits _{\ell=1}^{L} \sum \limits _{s=1}^{S} \bigg [\bigg (4\mathcal{E}_{\ell, s} \cdot \log (t) + 1 
    \notag \\ 
    &  +\, M_{\max }^{\ell, s}\big (\lfloor \log _{4}(3\mathcal{E}_{\ell, s}\log (t)+1) \rfloor +1 \big) \bigg) 
    \notag \\ 
    &  \cdot \bigg (\mu _{\ell, P(\ell)} + \! \! \! \! \sum_{q \in P^{-1}(s) \cap \mathcal{N}_\ell } \! \! \! \! \left[ \mu _{q,s} \right] \!\! - \mu _{\ell, s} \bigg) \bigg] 
    \notag \\ 
    &+\, 2LS \cdot Q_{\max }\! \cdot \! \left({\!\sum \limits _{\ell=1}^{L} \!\sum \limits _{s=1}^{S} \!(\lfloor \log _{4}(3\mathcal{E}_{\ell,s}\log (t)\!+\!1) \rfloor \!+\!1) \!}\right) 
    \notag 
    \\ \hspace{-0cm}
    &
    +\, \bigg [\!\bigg (\!2 \!\cdot \! L S \!\bigg) 
    \notag 
    \cdot \! \left({\!\sum \limits _{\ell=1}^{L} \!\sum \limits _{s=1}^{S} (\lfloor \log _{4}(3\mathcal{E}_{\ell, s} \log (t)+1) \rfloor +1) }\right) \bigg] 
    \notag \\ 
    &\cdot \left[{\sum \limits _{q=1}^{L} \mu _{q, P(q)} }\right] 
    \notag 
    +\, \Bigg(L \cdot Q_{\max } \notag \\ 
    &+\, (L S \cdot (\max_{\ell \in \mathcal{L}} D_{\ell})+LS) \frac {4\mathcal{C}_{\max }}{\pi _{\min }}\left({\sum \limits _{q=1}^{L} \mu _{q,P(q)}}\right) \Bigg) 
    \notag \\  
    &\cdot \left({\left\lceil \log _{4}\left({\frac {3}{2}t+1}\right) \right\rceil }\right) +O(1),
    \end{align}}
\end{theorem}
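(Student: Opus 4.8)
The plan is to bound $\mathcal{R}(t)$ by decomposing the cumulative loss according to the three phases of SMILE—exploration, allocation, and exploitation—and then controlling each contribution separately. The central observation is that an exploitation phase run under the \emph{correct} stable allocation contributes zero regret, since cells then transmit exactly as the genie would; all regret therefore originates from (i) the exploration epochs, during which cells deliberately sample suboptimal channels; (ii) the allocation epochs, whose per-phase cost is at most $2LS$ slots; and (iii) the rare events in which estimation errors cause an incorrect allocation to be computed and exploited. I would first count the exploration epochs: by the exploration condition (\ref{eq: exploration condition}) together with the exploration function $\tau_{\ell,s}(t)=\max\{\hat E_{\ell,s}(t),\tfrac{2}{I}\}\log t$, the number of estimation samples collected on each channel is capped at $O(\log t)$, producing the $4\mathcal{E}_{\ell,s}\log t$ terms. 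Because each exploration phase contains a geometrically growing estimation sub-epoch of length $4^{N_{\ell,s}^{ER}}$, the number of \emph{phases}—and hence of recovery epochs, allocation triggers, and exploitation restarts—is only $\lfloor\log_4(3\mathcal{E}_{\ell,s}\log t+1)\rfloor+1$, which accounts for every doubly-logarithmic factor appearing in the bound.

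Next I would treat the Markovian subtleties of the restless setting. The recovery epoch (RE) re-synchronizes the chain to its last observed state before each estimation epoch, so that the concatenated estimation samples behave as a single contiguous trajectory of the underlying chain; I would bound the expected length of each RE by the return-time quantity $M_{\max}^{\ell,s}$, controlled through $\min_{r}\pi_{\ell,s}^{r}$, which is in turn dominated by $Q_{\max}$ and explains the $Q_{\max}$ and $M_{\max}^{\ell,s}$ prefactors. The heart of the argument is a concentration inequality for time averages of a finite, irreducible, aperiodic chain expressed through its spectral gap $\overline{\lambda}_{\min}$: once a cell has accumulated at least $\tfrac{4\kappa}{\Delta^2}$ samples on a channel, the estimate $\hat r_{\ell,s}$ lies within half the gap $\Delta$ of $\mu_{\ell,s}$ except on an event of probability decaying polynomially in $t$. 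The constants $\kappa$ and $I$ in (\ref{I_kappa}) are calibrated precisely so that $\tfrac{2}{I}\log t$ samples force this failure probability below $t^{-2}$, while the row and column exploration coefficients $E_{\ell,s}^{(R)},E_{\ell,s}^{(C)}$ guarantee enough samples to resolve every row gap $\min_{p\neq s}(\mu_{\ell,s}-\mu_{\ell,p})$ and every relevant column gap $\min_{q\in\mathcal{V}_{\ell,s}}(\mu_{\ell,s}-\mu_{q,s})$ that the matching procedure must compare.

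With accurate estimates in hand, I would establish correctness of the allocation phase: whenever all pairwise comparisons invoked by the distributed generalized Gale–Shapley procedure are resolved consistently with the true means, the matching computed under $\hat R(t)$ coincides with the genie-optimal stable allocation $P^{*}$, so the ensuing exploitation incurs no regret. The residual contribution comes from the complementary event that some comparison is misresolved despite sufficient sampling; summing the per-slot loss $\sum_{q}\mu_{q,P(q)}$ over such events, weighted by their probability and by the number of exploitation slots—which grows geometrically like $4^{N^{EI}}$, i.e. over $\lceil\log_4(\tfrac{3}{2}t+1)\rceil$ phases—yields the final block carrying the $\tfrac{4\mathcal{C}_{\max}}{\pi_{\min}}$ factor, since a union bound over the at most $LS(\max_{\ell}D_\ell)+LS$ compared pairs and over the $\mathcal{C}_{\max}$ chain states contributes exactly these combinatorial and stationary-measure constants. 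Assembling the exploration, recovery, allocation, and bad-event contributions and absorbing the finite initialization cost into $O(1)$ reproduces (\ref{eq: regret bound}).

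The step I expect to be the main obstacle is the Markov-chain concentration under the artificial recovery mechanism: standard i.i.d. Hoeffding-type bounds do not apply, so I must verify that re-synchronizing to the last observed state in the RE sub-phase genuinely produces a contiguous sample path whose empirical mean concentrates at the spectral-gap rate encoded by $\kappa$ and $I$, while simultaneously ensuring the expected recovery overhead stays $O(Q_{\max})$ per phase rather than diverging. Coupling this per-cell concentration with the \emph{coupled} multi-player dynamics—where one cell's misestimate can cascade into another cell's allocation decision through the shared interference constraints—is the delicate part, and it is controlled by a single union bound over all cells, channels, and chain states, together with the gap-adaptive design of $E_{\ell,s}$ that supplies just enough samples to separate every comparison the matching relies on.
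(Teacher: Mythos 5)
Your overall architecture matches the paper's proof: a three-phase decomposition (exploration / allocation / exploitation), counting epochs via the geometric $4^{N}$ growth to get the $\lfloor\log_4(3\mathcal{E}_{\ell,s}\log t+1)\rfloor+1$ factors, Lezaud-type spectral-gap concentration calibrated by $\kappa$ and $I$ so that $\tfrac{2}{I}\log t$ samples drive failure probabilities below $t^{-2}$, a union bound over the events that a cell misorders its $D_\ell+1$ best channels or loses a column comparison to a neighbor in $\mathcal{V}_{\ell,s}$, and the cancellation of the $O(t_n)$ exploitation length against the $O(t_n^{-1})$ mis-allocation probability over $\lceil\log_4(\tfrac{3}{2}t+1)\rceil$ phases. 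Your identification of the recovery-epoch re-synchronization as the step that legitimizes treating the concatenated EE samples as one contiguous trajectory is exactly the point the paper relies on, and your "absorb the initialization into $O(1)$" corresponds to the paper's Lemma on $\mathbb{E}[T_1]<\infty$.

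There is, however, one concrete error in your plan. Your stated "central observation"---that an exploitation phase run under the correct stable allocation contributes zero regret---is false in the restless Markovian setting. Regret is measured against $t\cdot\sum_\ell\mu_{\ell,P^*(\ell)}$, i.e.\ against stationary means, whereas the expected reward accumulated over a finite contiguous block started from an arbitrary (non-stationary) state deviates from $\mu\cdot T$ by up to the Anantharam constant $A_p\le Q_{\max}$. Each exploitation phase therefore contributes an $O(Q_{\max})$ transient cost even when the allocation is correct, and since there are $\lceil\log_4(\tfrac{3}{2}t+1)\rceil$ such phases this produces the $L\cdot Q_{\max}\cdot\lceil\log_4(\tfrac{3}{2}t+1)\rceil$ term in the theorem, which your plan as written would omit. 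Relatedly, you conflate two distinct quantities: $M_{\max}^{\ell,s}$ (the mean hitting time, which bounds the expected RE length and multiplies the per-slot suboptimality gap in the exploration regret) and $Q_{\max}$ (which multiplies the \emph{count} of contiguous play blocks across all three phases via the transient-effect lemma). Neither is "dominated by" the other in the paper's accounting; they enter the bound through separate mechanisms, and getting the theorem's exact form requires keeping them separate.
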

where $M^{\ell, s}_{r,r'}$ denotes the mean hitting time of state $r'$ starting from state $r$ for channel $s$ used by cell $\ell$, $M^{\ell, s}_{\max }\triangleq \max \limits _{r,r' \in \mathcal {R}^{\ell, s}, r\neq r'}M^{\ell, s}_{r,r'}$ and $\mathcal{E}_{\ell, s}$ is given by:

\begin{align} 
\label{eq: {E}_l,s}
\mathcal{E}_{\ell, s}\triangleq \begin{cases} \max \{2/I\;,\;E_{\ell,s}^{(\max)} \} \;, & \text{if} \; s\in \mathcal {A}_{\ell} \\ \max \{2/I\;,\;4\kappa/\Delta _{\min }^{2}\} \;, & \text{if} \; s {\not \in }\mathcal {A}_{\ell} \end{cases}. 
\end{align}
\noindent
The set $\mathcal{A}_{\ell}$ consists of all indices $s \in \mathcal{S}$ of cell $\ell$ that for $s \in \mathcal{S}_{\ell}$ satisfy
{\small
\begin{equation*}
\min \{\min _{p \neq s} \{ ( \mu _{\ell, s}-\mu _{\ell, p} )^{2} \}, \min _{q \in \mathcal{V}_{\ell, s}} \{ ( \mu _{\ell, s}-\mu _{q, s} )^{2}\} \} -2\epsilon > \Delta _{\min }^{2},
\end{equation*}
}
and for $s \notin \mathcal{S}_{\ell}$ satisfy
\begin{equation*} \min _{p \neq s} \{ ( \mu _{\ell, s}-\mu _{\ell, p} )^{2} \} -2\epsilon > \Delta _{\min }^{2},\end{equation*}
where $E^{(\max)}_{\ell, s}$ is defined as:
{\small
\begin{align}
\label{eq: E_l,s max}
&\hspace {-1.3pc}E_{\ell, s}^{(\max)} \notag 
\triangleq
\frac {4\kappa}{\min \big \{\min \limits _{p \neq s} \{ (\mu _{\ell, s}\!-\!\mu _{\ell, p}) \}^{2}, \min \limits_{q \in \mathcal{V}_{\ell, s}} \{ ( \mu _{\ell, s}-\mu _{q, s} )^{2}\} \big \} \!-\!2\epsilon }
\end{align}
}
The proof is given in the Appendix.\vspace{0.1cm}

As seen in the theorem, SMILE achieves logarithmic regret with time, ensuring that its performance converges asymptotically to that of a genie with complete knowledge of the expected rates. This regret order is the best attainable in problems of this class, providing strong theoretical guarantees for the efficiency of the algorithm.

\section{Simulation Results}

In this section, we present extensive simulations to evaluate the efficiency of SMILE. We begin by demonstrating its convergence to the optimal centralized solution of the stable allocation. We then evaluate SMILE’s learning performance relative to state-of-the-art RMAB-based methods.

First, we consider a case with $L=3$ cells and $S=5$ channels, where cells 1 and 2 are neighbors and cell 3 has no neighbors. The wireless channels follow Rayleigh fading and are modeled using an FSMC with $N=6$ quantized states. The resulting transition probability matrix $P$ and the expected rate matrix $\mathbf{M}$ are:
{\small
\[
P =
\begin{pmatrix}
3/6 & 2/6 & 1/6 & 0 & 0 & 0 \\
2/8 & 3/8 & 2/8 & 1/8 & 0 & 0 \\
1/9 & 2/9 & 3/9 & 2/9 & 1/9 & 0 \\
0 & 1/9 & 2/9 & 3/9 & 2/9 & 1/9 \\
0 & 0 & 1/8 & 2/8 & 3/8 & 2/8 \\
0 & 0 & 0 & 1/6 & 2/6 & 3/6
\end{pmatrix},
\]
}

{\small
\[
\textbf{M} =
\begin{pmatrix}
45 & 10 & 35 & 25 & 80 \\
30 & 45 & 20 & 75 & 90 \\
55 & 5 & 70 & 15 & 45
\end{pmatrix}.
\]
}

In Fig.~\ref{sim:sim1}, we show the average sum rate achieved by SMILE. We compare its performance to the optimal centralized solver (an oracle with full knowledge of the mean rates) to illustrate convergence, and to a random allocation baseline to highlight the benefit of learning. As the figure shows, SMILE rapidly approaches the centralized solver’s performance and substantially outperforms the random allocation.

\begin{figure}[htbp]
	\centering \epsfig{file=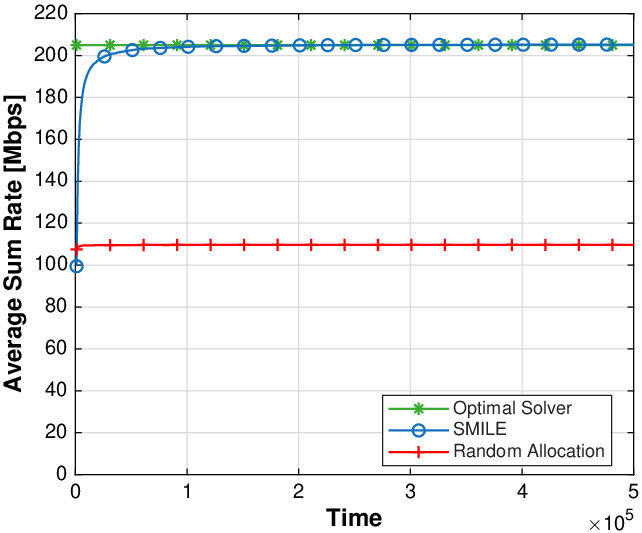,
    width=0.3\textwidth}
  \caption{The average sum rate as a function of time with $L=3$, $S=5$ under SMILE, the optimal stable allocation (benchmark), and random allocation.}
  \label{sim:sim1}
  \end{figure}

Next, we considered a network with a large number of cells to evaluate SMILE’s performance in large-scale settings. As before, we compared SMILE against the centralized optimal solver (oracle), which, having full knowledge of the mean rates, computes the stable allocation. We observed clear convergence of the achieved sum rate both in a system with 50 cells and 50 channels (Fig.~\ref{sim:sim4}) and in an even larger configuration with 100 cells and 100 channels (Fig.~\ref{sim:sim5}). We then examined the case of 100 cells and 50 channels (Fig.~\ref{sim:sim6}), where the interference graph enables channel reuse. As the figures show, SMILE consistently and rapidly approaches the performance of the centralized solver across all these large-scale scenarios.

\begin{figure}[htbp]
	\centering \epsfig{file=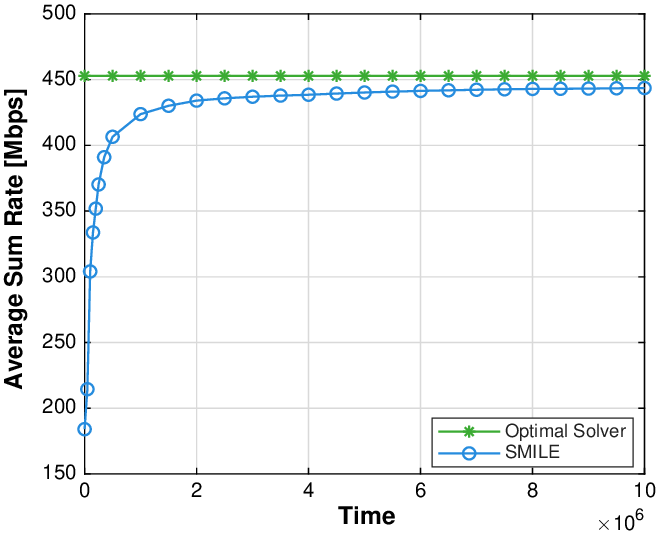,
    width=0.3\textwidth}
  \caption{The average sum rate as a function of time with $L=50$, $S=50$ under SMILE and the optimal stable allocation (benchmark).}
  \label{sim:sim4}
  \end{figure}

\begin{figure}[htbp]
	\centering \epsfig{file=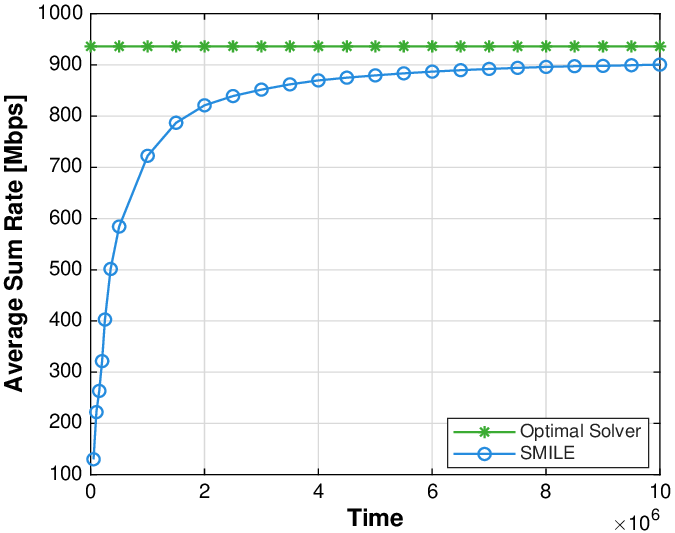,
    width=0.3\textwidth}
  \caption{The average sum rate as a function of time with $L=100$, $S=100$ under SMILE and the optimal stable allocation (benchmark).}
  \label{sim:sim5}
  \end{figure}

\begin{figure}[htbp]
	\centering \epsfig{file=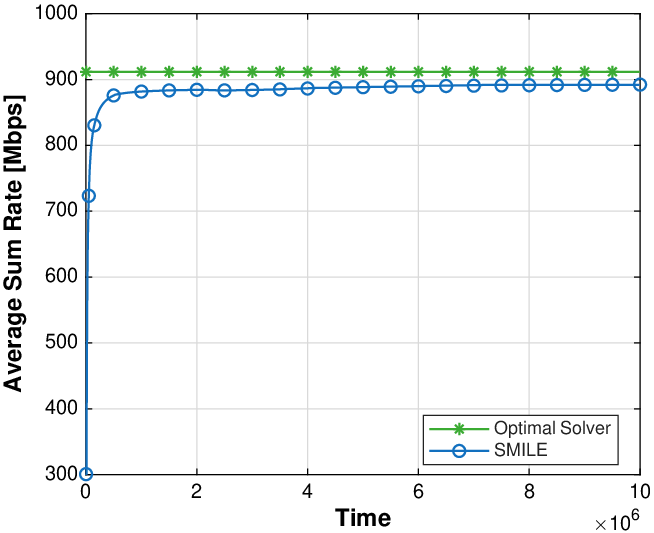,
    width=0.3\textwidth}
  \caption{The average sum rate as a function of time with $L=100$, $S=50$ under SMILE and the optimal stable allocation (benchmark).}
  \label{sim:sim6}
  \end{figure}

We next evaluated the learning efficiency of SMILE against state-of-the-art RMAB-based algorithms. The simulation setting follows a hierarchical spectrum-access model in which primary and secondary users share the spectrum. Primary users intermittently occupy each channel, and secondary users are allowed to transmit only when the channel is idle. Each channel alternates between a “good’’ state, offering a positive expected rate, and a “bad’’ state, yielding zero rate. The temporal behavior of primary-user activity is captured by a Gilbert–Elliott Markov model. In Fig.~\ref{sim:sim2}, we compare SMILE with three well-known RMAB learning algorithms—RCA~\cite{Tekin_2012_Online}, DSEE~\cite{liu2012learning}, and DSSL~\cite{gafni2022distributed}. As shown, SMILE attains substantially lower regret than all competing methods, highlighting its superior learning efficiency in RMAB environments.

\begin{figure}[htbp]
	\centering \epsfig{file=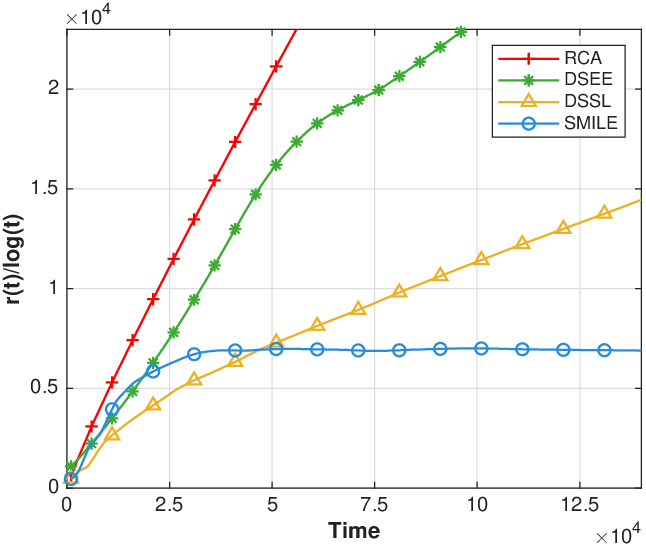,
    width=0.3\textwidth}
  \caption{The regret (normalized by $\log t$) as a function of time under RCA, DSEE, DSSL, and SMILE algorithms. 
  }
  \label{sim:sim2}
  \end{figure}

Finally, we compared SMILE with the dE3 algorithm \cite{nayyar2016regret} and the Game of Thrones (GoT) algorithm \cite{bistritz2021game} in a general setting where channels yield different expected rates for different cells. The dE3 algorithm requires inter-cell communication, as it performs a distributed auction in which cells observe each other's bids, while GoT explores all channels uniformly to enable agreement on an optimal allocation in a fully distributed manner. We set $L=S=4$ with a neighborhood graph in which cells 1 and 2 are neighbors, cells 1 and 3 are neighbors, and cell 4 has no neighbors. The instantaneous rates were generated as $r_{\ell,s}(t)=\mu_{\ell,s}+z_{\ell,s}(t)$, where $z_{\ell,s}(t)$ are i.i.d.\ Gaussian with zero mean and variance $\sigma^2=0.05$. For dE3 and GoT we used the parameter settings as used by their authors for i.i.d.\ channels (the setting for which they were designed). As shown in Fig.~\ref{sim:sim7}, SMILE substantially outperforms both algorithms, owing to its more efficient exploration structure.

\begin{figure}[htbp]
	\centering \epsfig{file=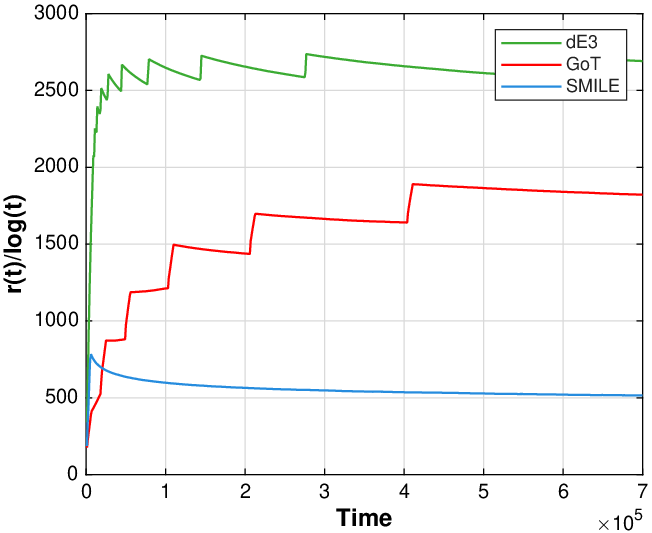,
    width=0.3\textwidth}
  \caption{The expected regret under dE3, GoT and SMILE versus time.}
  \label{sim:sim7}
  \end{figure}



\section{Conclusion}
We introduced a distributed learning framework for multi-cell spectrum access and sharing under spatially structured interference and restless Markovian channels. The proposed SMILE algorithm efficiently learns cell-specific channel rates while ensuring convergence to a stable allocation with provable logarithmic regret. Our approach generalizes previous models, accommodating both fully interfering and collision-free scenarios, and provides a practical, low-complexity solution for distributed spectrum management. Theoretical guarantees and numerical results together demonstrate the strong performance of the SMILE algorithm.

\section{Appendix}
In this appendix, we present the proof of Theorem 1.
\begin{definition}
    Let $T_1$ be the smallest integer for which, for every $t \geq T_1$, the following conditions hold:
    $$   
    E_{\ell,s} \leq \hat{E}_{\ell,s}(t) \quad \text{for all } \ell \in \mathcal{L},\, s \in \mathcal{S},
    $$
    and
    $$
    \hat{E}_{\ell,s}(t) \leq E_{\ell,s}^{(\max)} \quad \text{for all } \ell \in \mathcal{L},\, s \in \mathcal{A}_\ell.
    $$
    \end{definition}    

\begin{lemma}
\label{lemma 1}
    Assume that the SMILE algorithm is implemented as described in Section~\ref{sec: The Algorithm}. Then $E[T_1] < \infty$, and this bound does not depend on $t$.
\end{lemma}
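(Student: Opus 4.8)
The plan is to reduce the lemma to a \emph{deterministic} statement — ``if all estimates are close, the two conditions of Definition~1 hold'' — together with a Markov-chain concentration bound, and then to control $E[T_1]$ by summing the failure probabilities over the geometrically spaced decision epochs. First I would show there is a threshold $\delta=\delta(\epsilon)>0$ such that, at any time $t$ with $|\hat r_{\ell,s}(t)-\mu_{\ell,s}|\le\delta$ for all $\ell,s$, both inequalities in Definition~1 hold. Writing the combined squared gap $G_{\ell,s}=\min\{\min_{p\ne s}(\mu_{\ell,s}-\mu_{\ell,p})^2,\ \min_{q\in\mathcal V_{\ell,s}}(\mu_{\ell,s}-\mu_{q,s})^2\}$ and its empirical analogue $\hat G_{\ell,s}(t)$, one checks that $E_{\ell,s}=4\kappa/G_{\ell,s}$, that $\hat E_{\ell,s}(t)=4\kappa/\max\{\Delta_{\min}^2,\hat G_{\ell,s}(t)-\epsilon\}$ (via the identity $\min\{\max\{a,x\},\max\{a,y\}\}=\max\{a,\min\{x,y\}\}$ applied to \eqref{eq E estimation}), and that $E^{(\max)}_{\ell,s}=4\kappa/(G_{\ell,s}-2\epsilon)$. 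Since $\Delta_{\min}^2\le G_{\ell,s}$ always, the first inequality $E_{\ell,s}\le\hat E_{\ell,s}(t)$ reduces to $\hat G_{\ell,s}(t)\le G_{\ell,s}+\epsilon$, while for $s\in\mathcal A_\ell$ we have $G_{\ell,s}-2\epsilon>\Delta_{\min}^2$ by definition, so the second inequality reduces to $\hat G_{\ell,s}(t)\ge G_{\ell,s}-\epsilon$. A direct perturbation estimate gives $|\hat G_{\ell,s}(t)-G_{\ell,s}|\le 8r_{\max}\delta+4\delta^2$, so any $\delta$ with $8r_{\max}\delta+4\delta^2\le\epsilon$ suffices.

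Next I would invoke the Markov-chain Chernoff bound that the constants $\kappa$ and $I$ in \eqref{I_kappa} are built for. The recovery epochs guarantee that the estimation (EE) samples collected on each channel form a single continuous trajectory of the underlying irreducible, aperiodic chain, so a Lezaud/Gillman-type inequality applies to the empirical mean $\hat r_{\ell,s}$. The choice of $I$ is calibrated precisely so that, once $T_{\ell,s}^{\mathrm{EE}}(t)\ge(2/I)\log t$ samples have been accumulated, summing the per-sample exponential tail over all sample counts $\ge(2/I)\log t$ yields $P\big(|\hat r_{\ell,s}(t)-\mu_{\ell,s}|\ge\delta\big)\le C\,t^{-2}$, with $\delta$ the tolerance of the previous step and $C$ a chain-dependent constant.

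The quantities $\hat E_{\ell,s}(t)$, hence both conditions, change only when a new EE-sample is recorded and are frozen throughout each allocation/exploitation round; it therefore suffices to verify the conditions at the decision epochs $t_1<t_2<\cdots$ at which an exploration round ends. At each such $t_k$ the exploration loop has terminated on \emph{every} channel, so $T_{\ell,s}^{\mathrm{EE}}(t_k)\ge\tau_{\ell,s}(t_k)\ge(2/I)\log t_k$ for all $\ell,s$; combined with the previous two steps and a union bound over the $LS$ pairs, $P(\text{some condition fails at }t_k)\le LS\,C\,t_k^{-2}$. Because the exploitation phase of round $k$ has deterministic length $2\cdot4^{\,N^{EI}-1}$, the decision epochs grow geometrically, $t_k\gtrsim 4^k$ and $t_{k+1}\le c\,t_k$ for a bounded $c$. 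Since $T_1\le t_{k^\ast}$, where $t_{k^\ast}$ is the first decision epoch after the last one at which a condition fails, I bound
\[
E[T_1]\ \le\ \sum_k t_{k+1}\,P(\text{failure at }t_k)\ \le\ c\,LS\,C\sum_k t_k^{-1}\ \lesssim\ \sum_k 4^{-k}\ <\ \infty,
\]
a bound independent of $t$, as claimed.

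The crux is the concentration step in the \emph{restless} regime. Because a channel keeps evolving while unobserved, the raw observations across successive exploration epochs do not form a contiguous Markov sample path, so the argument hinges on showing that the recovery epochs — which resume collecting EE-samples only after the last observed state reappears — splice the segments into one genuine trajectory, letting the spectral-gap factor $\overline{\lambda}_{\min}$ inside $\kappa$ legitimately govern the deviation rate. Getting the constants to line up so the resulting tail is $O(t^{-2})$, strong enough to beat the geometric growth of the round lengths in the final summation, is the delicate part; the remaining ingredients are the perturbation algebra of the first step and a convergent geometric series.
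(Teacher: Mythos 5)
Your proposal is correct in substance and shares the paper's analytical core --- both arguments hinge on Lezaud's Chernoff bound applied to the EE samples, which the recovery epochs splice into a single trajectory of the underlying chain, together with the sample floor $T^{\mathrm{EE}}_{\ell,s}\ge (2/I)\log t$ --- but it departs from the paper's proof in two ways. First, where the paper translates the events $\{\hat E_{\ell,s}(t)<E_{\ell,s}\}$ and $\{\hat E_{\ell,s}(t)>E^{(\max)}_{\ell,s}\}$ into inequalities on squared empirical gaps via an explicit case analysis over the argmins $s^*,\ell^*,(\hat s)^*,(\hat\ell)^*$ followed by a set-theoretic decomposition, you encapsulate the same content in a deterministic perturbation lemma ($|\hat G_{\ell,s}-G_{\ell,s}|\le 8r_{\max}\delta+4\delta^2$ whenever all estimates are $\delta$-accurate), which is cleaner and makes the roles of $\epsilon$, $\Delta_{\min}$, and the set $\mathcal A_\ell$ transparent. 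Second, the paper writes $E[T_1]=\sum_n\Pr(T_1\ge n)$ and bounds it by the double sum $\sum_n\sum_{t\ge n}\Pr(\text{failure at }t)$ over all integer times, which forces the per-time tail to be $O(t^{-(2+\delta)})$; you instead bound $E[T_1]\le\sum_k t_{k+1}\Pr(\text{failure at }t_k)$ over geometrically spaced decision epochs, which already converges with a $t^{-(1+\delta)}$ tail and is therefore less demanding on the calibration of $I$ and $\kappa$. The one place where your bookkeeping is loose is the claim that $\hat E_{\ell,s}$ is frozen between decision epochs: within the exploration portion of a round, $\hat r_{\ell,s}$ (hence $\hat E_{\ell,s}$) is refreshed after every individual exploration phase, possibly several times per channel per round, and at those intermediate instants the just-updated channel may still hold fewer than $(2/I)\log t$ samples, so the $t^{-2}$ concentration does not apply verbatim; since $T_1$ is defined over \emph{all} $t$, these instants matter. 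This is patchable by a union bound over the $O(\log\log t)$ updates per round (which preserves summability), and the paper's own proof is no more careful on this point, as it invokes $T^{\mathrm{EE}}_{\ell,s}(t)>(2/I)\log t$ at all times even though this is the negation of the exploration condition.
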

\begin{proof}
    $E[T_1]$ can be expressed as
    {\small
    \begin{align}
    E[T_{1}]=&\sum \limits _{n=1}^{\infty } n \cdot Pr\left ({T_{1}= n }\right)= \sum \limits _{n=1}^{\infty }\Pr \left ({T_{1}\geq n }\right) 
    \notag \\
    =&\sum \limits _{n=1}^{\infty } \Pr \big (\bigcup \limits _{\ell \in \mathcal{L}} \bigcup \limits _{s\in \mathcal {A}_{\ell}} \bigcup \limits _{t=n}^{\infty }(\hat{E}_{\ell, s}(t)
    \notag \\
    < &E_{\ell, s} { \text {or }} \hat{E}_{\ell, s}(t)> E_{\ell, s}^{(\max)} { \text {or }} 
    \notag \\
    &\bigcup \limits _{\ell \in \mathcal{L}} \bigcup \limits _{s {\not \in }\mathcal {A}_{\ell}}\bigcup \limits _{t=n}^{\infty }(\hat{E}_{\ell, s}(t)< E_{\ell, s}) \big) 
    \notag \\
    \leq&\sum \limits _{\ell \in \mathcal{L}} \sum \limits _{s\in \mathcal {A}_{\ell}}\sum \limits _{n=1}^{\infty } \sum \limits _{t=n}^{\infty } \Pr \big (\hat{E}_{\ell, s}(t) 
    \notag \\
    &< E_{\ell, s} { \text {or }} \hat{E}_{\ell, s}(t)> E_{\ell, s}^{(\max)}\big) 
    \notag \\
    &+\sum \limits _{\ell \in \mathcal{L}} \sum \limits _{s {\not \in }\mathcal {A}_{\ell}}\sum \limits _{n=1}^{\infty } \sum \limits _{t=n}^{\infty } \Pr \big (\hat{E}_{\ell, s}(t)< E_{\ell, s}\big).
    \end{align}
    }
Note that it suffices to show that there exist constants $C>0$ and $\delta > 0$ such that, for all $\ell\in \mathcal {L}, s\in \mathcal {A}_{\ell}$ and for all $t\geq n$:
\begin{equation}
\label{lemma 1 proof: eq 1}
\Pr \big (\hat{E}_{\ell, s}(t)< E_{\ell, s} { \text { or }} \hat{E}_{\ell, s}(t)> E_{\ell, s}^{(\max)}\big)\leq C\cdot t^{-(2+\delta)},
\end{equation}
since then:
{\small
\begin{align}
&\hspace {-1.3pc}
\sum \limits _{\ell\in \mathcal {L}} \sum \limits _{s\in \mathcal {A}_{\ell}}\sum \limits _{n=1}^{\infty } \sum \limits _{t=n}^{\infty } \Pr \big (\hat {E}_{\ell, s}(t)< E_{\ell, s} { \text { or }} \hat {E}_{\ell, s}(t)> E_{\ell, s}^{(\max)}\big) 
\notag \\ 
\leq&L S C\left [{\sum \limits _{t=1}^{\infty } t^{-(2+ \delta)}+\sum \limits _{n=2}^{\infty }\sum \limits _{t=n}^{\infty } t^{-(2+ \delta)}}\right] 
\notag \\
\leq&LS C\left [{\sum \limits _{t=1}^{\infty } t^{-(2+ \delta)}+\sum \limits _{n=2}^{\infty }\int \limits _{n-1}^{\infty } t^{-(2+ \delta)}dl}\right] 
\notag \\
=&LS C\left [{\sum \limits _{t=1}^{\infty } t^{-(2+ \delta)}+\frac {1}{1+\delta }\sum \limits _{n=2}^{\infty }(n-1)^{-(1+\delta)}}\right]< \infty,
\end{align}
}
which is bounded independently of $t$. In the same manner, if we show that there exist constants $C>0$ and $\delta > 0$ such that, for all $\ell\in \mathcal {L}, s\notin \mathcal {A}_{\ell}$ and for all $t\geq n$: $\Pr \big (\hat {E}_{\ell, s}(t)< E_{\ell, s}\big)\leq C\cdot t^{-(2+\delta)}$, the statement is complete.

We begin to bound (\ref{lemma 1 proof: eq 1}). For cell $\ell$ and channel $s \in \mathcal{S}_\ell$, the event corresponding to the first inequality in (\ref{lemma 1 proof: eq 1}), namely $\hat{E}_{\ell,s}(t) < E_{\ell,s}$, implies:
{\small
\begin{align}
&\hspace {-1.5pc}
\max \bigg \{ \Delta _{\min }^{2}, \min \big \{ \min _{p \neq s} \{ (\hat{r}_{\ell,s}(t)-\hat{r}_{\ell,p}(t))^{2} \} \notag \\
&-\,\epsilon, \min _{q \in \mathcal{V}_{\ell, s}} \{ (\hat{r}_{\ell,s}(t)-\hat{r}_{q, s}(t))^{2} \} -\epsilon \big \} \bigg \} 
\notag \\
>&\min \big \{ \min _{p \neq s} \{ (\mu _{\ell, s}-\mu _{\ell, p})^{2} \}, \min _{q \in \mathcal{V}_{\ell, s}} \{ (\mu _{\ell, s}-\mu _{q,s})^{2} \} \big \},
\end{align}
}
which, after algebraic manipulations and using (\ref{delta}), entails that at least one of the following holds:
{\small
\begin{align}
\label{lemma 1 proof: eq 5}
\min _{p \neq s} \{ (\hat{r}_{\ell,s}(t) -\hat {r}_{\ell, p}(t))^{2} \} -\epsilon>&\min _{p \neq s} \{ (\mu _{\ell, s}-\mu _{\ell, p})^{2} \} 
\\
\min _{q \in \mathcal{V}_{\ell, s}} \{ (\hat{r}_{\ell,s}(t)-\hat{r}_{q, s}(t))^{2} \}  -\epsilon>&\min _{q \in \mathcal{V}_{\ell, s}} \{ (\mu _{\ell, s}-\mu _{q,s})^{2} \}.
\end{align}
}
Similarly, from the second inequality of (\ref{lemma 1 proof: eq 1}) we get that one of the following holds:
{\small
\begin{align} 
\label{lemma 1 proof: eq 6}
\min _{p \neq s} \{ (\hat{r}_{\ell,s}(t) - \hat {r}_{\ell, p}(t))^{2} \} -\epsilon<&\min _{p \neq s} \{ (\mu _{\ell, s}-\mu _{\ell, p})^{2} \}  -2\epsilon
\\
\min _{q \in \mathcal{V}_{\ell, s}} \{ (\hat{r}_{\ell,s}(t)-\hat{r}_{q, s}(t))^{2} \}  -\epsilon<&\min _{q \in \mathcal{V}_{\ell, s}} \{ (\mu _{\ell, s}-\mu _{q,s})^{2} \} -2\epsilon.
\end{align}
}
Let 
{\small
\begin{equation*}
\begin{array}{l}
s^{*} = \text {arg} \min  \limits_{p \neq s} \{(\mu _{\ell, s}-\mu _{\ell, p } )^{2}\}\;,\; 
\ell^{*} = \text {arg} \min \limits_{q \in \mathcal{V}_{\ell, s}} \{(\mu _{\ell, s}-\mu _{q,s} )^{2}\}, 
\\ (\hat{s})^{*} = \text {arg} \min \limits_{p \neq s} \{(\hat{r} _{\ell, s}(t)-\hat{r} _{\ell, p }(t) )^{2}\}, 
\\ 
(\hat{\ell})^{*} = \text {arg} \min \limits_{q \in \mathcal{V}_{\ell, s}}\{ (\hat{r} _{\ell, s}(t)-\hat{r} _{q,s}(t) )^{2}\}.
\end{array}
\end{equation*}
}
Note that we are not guaranteed that $s^{*}=(\hat{s})^{*}$ or that $\ell^{*}=(\hat{\ell})^{*}$, but from (\ref{lemma 1 proof: eq 5}) we get that one of the following holds:
{\small
\begin{align}
(\hat{r} _{\ell, s}(t)-\hat{r} _{\ell, s^* }(t) )^{2} - \epsilon &\geq (\hat{r} _{\ell, s}(t)-\hat{r} _{\ell, (\hat{s})^* } (t))^{2} - \epsilon \notag \\ &> (\mu _{\ell, s}-\mu _{\ell, s^* } )^{2}, \notag
\end{align}
}
\vspace{-0.5cm}
{\small
\begin{align}
(\hat{r} _{\ell, s}(t)-\hat{r} _{\ell^*, s }(t) )^{2} - \epsilon &\geq (\hat{r} _{\ell, s}(t)-\hat{r} _{(\hat{\ell})^*, s }(t) )^{2} - \epsilon  \notag \\ &> (\mu _{\ell, s}-\mu _{\ell^*, s } )^{2},
\end{align}
}
and from (\ref{lemma 1 proof: eq 6}) one of the following:
{\small
\begin{align}
    (\hat{r} _{\ell, s}(t)-\hat{r} _{\ell, (\hat{s})^* }(t) )^{2}  &< (\mu _{\ell, s}-\mu _{\ell, s^* } )^{2} -\epsilon \notag \\ &\leq (\mu _{\ell, s}-\mu _{\ell, (\hat{s})^* } )^{2} -\epsilon \notag
    \end{align}
    }
    \vspace{-0.5cm}
    {\small
\begin{align}
    (\hat{r} _{\ell, s}(t)-\hat{r} _{(\hat{\ell})^*, s } (t))^{2}  &< (\mu _{\ell, s}-\mu _{\ell^*, s } )^{2} -\epsilon \notag \\&\leq (\mu _{\ell, s}-\mu _{(\hat{\ell})^*, s} )^{2} -\epsilon.
\end{align}
}
Cascading the events written above we get:
{\small
\begin{align}
\label{lemma 1 proof: eq 2}
&\hspace {-1.5pc}
\Pr \big (\hat {E}_{\ell, s}(t)< E_{\ell, s} { \text {or }} \hat {E}_{\ell, s}(t)> D_{\ell, s}^{(\max)}~\big) 
\notag \\
\leq&\Pr \big ((\hat{r} _{\ell, s}(t)-\hat{r} _{\ell, s^* }(t) )^{2} - (\mu _{\ell, s}-\mu _{\ell, s^* } )^{2}>\epsilon \big) 
\notag \\
&+\, \Pr \big ((\hat{r} _{\ell, s}(t)-\hat{r} _{\ell^*, s }(t) )^{2} - (\mu _{\ell, s}-\mu _{\ell^*, s } )^{2} >\epsilon \big)
\notag \\
&+\, \Pr \big ( (\mu _{\ell, s}-\mu _{\ell, (\hat{s})^* } )^{2}-(\hat{r} _{\ell, s}(t)-\hat{r} _{\ell, (\hat{s})^* }(t) )^{2} >\epsilon \big)
\notag \\
&+\, \Pr \big ( (\mu _{\ell, s}-\mu _{(\hat{\ell})^*, s} )^{2}- (\hat{r} _{\ell, s}(t)-\hat{r} _{(\hat{\ell})^*, s } (t))^{2}>\epsilon \big).
\end{align}
}
Formally, each term represents the probability that the squared empirical difference and the squared true difference differ by more than $\epsilon$, in either direction, We look at the first term of (\ref{lemma 1 proof: eq 2}). Using conventional steps from set theory, it can be shown that:
{\small
\begin{align}
&\hspace {-1.6pc}
\Pr \big ((\hat{r} _{\ell, s}(t)-\hat{r} _{\ell, s^* }(t) )^{2} - (\mu _{\ell, s}-\mu _{\ell, s^* } )^{2}>\epsilon \big)  \notag\\ 
\leq&\big [\Pr \big (|(\hat {r}_{\ell, s}(t)- \hat {r}_{\ell, s^{*}}(t))
-(\mu _{\ell, s}- \mu _{\ell, s^{*}})|> \frac {\epsilon }{2(R+1)}\big) 
\notag \\
&+\,\Pr \big (|(\mu _{\ell, s}- \mu _{\ell, s^{*}})|>R+1\big)\big] \notag \\
&+ \big [\Pr \big (\mu _{\ell, s}>R' \big) \,+\,\Pr \big (\mu _{\ell, s^*}>R' \big) \notag \\ &+\Pr \big (|(\hat {r}_{\ell, s}(t)- \hat {r}_{\ell, s^{*}}(t))
-(\mu _{\ell, s}- \mu _{\ell, s^{*}})|>\frac {\epsilon }{2(R'+1)} \big)\big],
\end{align}
}
for every $R,R'>0$. We set $R=R'=r_{\max }$, which makes the second, third and fourth terms vanish. Consequently, the concentration bounds become:
{\small
\begin{align}
&\hspace {-1.6pc}
\Pr \big ((\hat{r} _{\ell, s}(t)-\hat{r} _{\ell, s^* }(t) )^{2} - (\mu _{\ell, s}-\mu _{\ell, s^* } )^{2}>\epsilon \big)
\notag \\
\label{lemma 1 proof: eq 3}
< &4 \cdot \max \bigg \{ \Pr \left({|\hat {r}_{\ell, s}(t)-\mu _{\ell, s}|> \frac {\epsilon }{4(r_{\max }+1)} }\right), 
\\&
\label{lemma 1 proof: eq 4}
\Pr \left({|\hat {r}_{\ell, s^{*}}(t)-\mu _{\ell, s^{*}}|> \frac {\epsilon }{4(r_{\max }+1)} }\right) \bigg \}.
\end{align}
}
Analogous bounds hold for all the terms in (\ref{lemma 1 proof: eq 2}). To bound (\ref{lemma 1 proof: eq 3}) and (\ref{lemma 1 proof: eq 4}) we use the results of Lezaud \cite{lezaud1998chernoff}:

\textit{Lemma 2:\cite{lezaud1998chernoff} Consider a finite-state, irreducible Markov chain $\{X_{t}\}_{t \geq 1}$ with state space $S$ matrix of transition probabilities $P$, an initial distribution $q$ and stationary distribution $\pi$. Let $N_{\textbf {q}}=\left \|{ \left({\frac {q^{(x)}}{\pi ^{(x)}}, x \in S}\right) }\right \|_{2}$. Let $\widehat {P}=P'P$ be the multiplicative symmetrization of $P$ where $P'$ is the adjoint of $P$ on $l_{2}(\pi)$. Let $\epsilon = 1-\lambda _{2}$, where $\lambda _{2}$ is the second largest eigenvalue of the matrix $P'$. $\epsilon$ will be referred to as the eigenvalue gap of $P'$. Let $f:S \rightarrow \mathcal {R}$ be such that $\sum \limits _{y \in S} \pi _{y}f(y)=0, \quad \|f\|_{2} \leq 1$ and  $0 \leq \|f\|_{2}^{2} \leq 1$ if $P'$ is irreducible. Then, for any positive integer $n$ and all $0< \lambda \leq 1$, we have:
$P \left ({\frac {\sum \limits _{t=1} ^{n}f(X_{t})}{n} \geq \lambda }\right) \leq N_{\textbf {q}}$.}

Let $\textbf {q}^{\ell, p}$ denote the initial distribution for channel $s$ and cell $\ell$. Then:
{\small
\begin{equation} 
N_{\textbf {q}}^{(\ell ,s)} = \left \|{ \left({\frac {q_{\ell ,s}^{r}}{\pi _{\ell ,s}^{r}}, r \in \mathcal{R}^{\ell ,s}}\right) }\right \|_{2} \leq \sum \limits _{r \in \mathcal{R}^{\ell ,s}} \left \|{ \frac {q_{\ell ,s}^{r}}{\pi _{\ell ,s}^{r}} }\right \|_{2} \leq \frac {1}{\pi _{min}}.
\end{equation}
}
Note that the empirical mean $\hat{r}_{\ell, s}(t)$ is based on $T_{\ell,s}^{\mathrm{EE}}(t)$ observations collected solely during the EE sub-epochs of the exploration phases. Consequently, the sample path underlying $\hat{r}_{\ell, s}(t)$ can be regarded as being generated by a Markov chain whose transition matrix matches that of the original channel $\{\ell, s \}$ so. This allows us to apply Lezaud’s result to bound (\ref{lemma 1 proof: eq 3}) and (\ref{lemma 1 proof: eq 4}). For equation (\ref{lemma 1 proof: eq 3}): 
We denote by $n^{\ell, s}_r(t)$ the number of times cell $\ell$ has observed state $r$ on channel $s$ up to time $t$. Then,
{\small
\begin{align}
&\hspace {-1.3pc}
\Pr \left({\hat {r}_{\ell ,s}(t) -\mu _{\ell, s} > \frac {\epsilon }{4(r_{\max }+1)} }\right) 
\notag \\
=&\Pr \left({\!\!\sum \limits _{r \in \mathcal {R}^{\ell, s}}\!\! r \cdot n_{r}^{\ell, s}(t)\!\!-\!\!T_{\ell,s}^{\mathrm{EE}}(t) \!\!\sum \limits _{r \in \mathcal {R}^{\ell, s}}\!\! r \cdot \pi _{\ell, s}^{r} \!\!>\!\!\! \frac {T_{\ell,s}^{\mathrm{EE}}(t)\cdot \epsilon }{4(r_{\max }+1)}}\right) 
\notag \\
\leq&\sum \limits _{r \in \mathcal {R}^{\ell, s}} \Pr \left({r \cdot n_{r}^{\ell, s}(t)\!\!-\!\!T_{\ell,s}^{\mathrm{EE}}(t) r \cdot \pi _{\ell, s}^{r} \!\!>\!\! \frac {T_{\ell,s}^{\mathrm{EE}}(t)\cdot \epsilon }{4(r_{\max }+1) |\mathcal {R}^{\ell, s}|} }\right) 
\notag \\
\leq&|\mathcal {R}^{\ell, s}| \cdot N_{\textbf {q}}^{(\ell, s)} \text {exp}\notag\\
&\left({\! \!-\!\!T_{\ell,s}^{\mathrm{EE}}(t) \!\cdot \!\frac {\epsilon ^{2}}{16(r_{\max }\!\!+\!1)^{2} \!\cdot \! r^{2} \!\cdot \! |\mathcal {R}^{\ell, s}|^{2} \!\cdot \! (\hat {\pi }_{\ell, s}^{r})^{2}} \!\!\cdot \! \frac {(1\!\!-\!\!\lambda _{\ell, s})}{12}\! }\right),
\end{align}
}
from (\ref{eq: exploration condition}) and (\ref{eq: exploration function}), we obtain: $T_{\ell,s}^{\mathrm{EE}}(t) > \frac {2}{I} \log (t)$ with $I$ defined in (\ref{I_kappa}). So we get,
{\small
\begin{align}
\label{proof eq 6}
\Pr \left({|\hat {r}_{\ell ,s}(t)-\mu _{\ell, s}|> \frac {\epsilon }{4(r_{\max}+1)} }\right) \leq \frac {|\mathcal{C}_{\max }|}{\pi _{\min }} \cdot t^{-2+\delta }. 
\end{align}
}
The same bound applies to (\ref{lemma 1 proof: eq 4}), and with the same steps, to all terms in (\ref{lemma 1 proof: eq 2}). The arguments for all $\ell\in \mathcal {L}, s {\not \in }\mathcal {A}_{\ell}$ is similar, and thus Lemma~\ref{lemma 1} follows.
\end{proof}

From Lemma~\ref{lemma 1}, we obtain that $T_1$ is finite.
Thus, by decomposing the expected regret in (\ref{eq: regret}) into the cases $t \leq T_1$ and $t > T_1$, we observe that the regret for $t \leq T_1$ remains constant and independent of $t$, i.e., $O(1)$.
Hence, we focus on deriving a bound for the expected regret when $t > T_1$.

Notice that, from the definition of $T_1$,
\begin{equation} 
\label{proof eq 1}
E_{\ell, s} \leq \hat {E}_{\ell, s}(t) \leq E_{\ell, s}^{(\max)}, 
\end{equation}
for all $\ell \in \mathcal{L}$, $s \in \mathcal {A}_{\ell}$ and LHS of the inequality holds for all $\ell \in \mathcal{L}$ and $s \in \mathcal{S}$. The lower bound guarantees that the exploration phases provide sufficient learning of the channel statistics, while the upper bound ensures that the channels are judiciously oversampled during these phases.

The regret for $t > T_1$ is given by:
{\small
\begin{align} 
\label{proof eq 2}
\mathcal {R}(t) \leq (t-T_{1}) \cdot \sum \limits _{\ell=1}^{L} \mu _{i,P(\ell)} - \mathbb {E} \left[{\sum \limits _{n=T_{1}+1}^{t} \sum \limits _{\ell=1}^{L} x_{\ell,\phi_{\ell}(n)}(n) }\right]
\end{align}
}
For ease of analysis, we bound (\ref{proof eq 2}) over $n = 1$ to $t$, where (\ref{proof eq 1}) (and the left-hand side for $s \notin \mathcal {A}_{\ell}$) holds for all $1 \leq n \leq t$. This construction provides an upper bound for (\ref{proof eq 2}):
{\small
\begin{align} 
\label{proof eq 3}
\mathcal {R}(t)\leq&(t-T_{1}) \cdot \sum \limits _{\ell=1}^{L} \mu _{\ell,P(\ell)} - \mathbb {E} \left[{\sum \limits _{n=T_{1}+1}^{t} \sum \limits _{\ell=1}^{L} x_{\ell,\phi_{\ell}(n)}(n)}\right] 
\notag \\
\leq&t \cdot \sum \limits _{\ell=1}^{L} \mu _{\ell,P(\ell)} - \mathbb {E} \left[{\sum \limits _{n=1}^{t} \sum \limits _{\ell=1}^{L} x_{\ell, \phi_{\ell}(n)}(n)}\right].
\end{align}
}
We denote by $T_{\ell, s}(t)$ the total number of transmissions of cell $\ell$ on channel $s$ up to time $t$ (where $x_{\ell, s}(n) = 0$ if cell $\ell$ did not attempt to access channel $s$ at time $n$). Hence, (\ref{proof eq 3}) takes the form
{\small
\begin{align} 
\label{proof eq 4}
\hspace {-1cm} 
\mathcal {R}(t)\leq&\sum \limits _{\ell=1}^{L} \sum \limits _{s=1}^{S} \left({\mu _{\ell, s} \cdot E[T_{\ell, s}(t)] - E\left[{\sum \limits _{n=1}^{t} x_{\ell, s}(n)}\right] }\right) \\ 
\label{proof eq 5}
&+\, \left({t \cdot \sum \limits _{\ell=1}^{L} \mu _{\ell,P(\ell)} - \sum \limits _{\ell=1}^{L} \sum \limits _{s=1}^{S} \mu _{\ell, s} \cdot E[T_{\ell, s}(t)] }\right),
\end{align}
}
We refer to equation~(\ref{proof eq 4}) as the regret due to the transient effect, i.e., when the initial state of the channel may not correspond to its stationary distribution. Similarly, equation~(\ref{proof eq 5}) represents the regret caused by not playing the stable allocation. Moreover, the sum of (\ref{proof eq 4}) and (\ref{proof eq 5}) can be decomposed into the sum of three distinct regret components, corresponding to the three phases described in Section~\ref{sec: The Algorithm}:
\begin{equation} 
\label{proof regrets sum}
\mathcal {R}(t) = \mathcal {R}^{ER}(t)+ \mathcal {R}^{A}(t) + \mathcal {R}^{EI}(t).
\end{equation}
where $\mathcal {R}^{ER}(t), \mathcal {R}^{A}(t),\mathcal {R}^{EI}(t)$ correspond to the regrets incurred during the exploration, allocation, and exploitation phases, respectively .Next, we derive separate bounds for the regret in each of these three phases.

\subsection{Regret in the Exploration Phases:}
To bound the two terms that compose the regret during the exploration phase, we first define $N_{\ell,s}^{ER}(t)$ as the number of exploration phases performed by cell $\ell \in \mathcal{L}$ on channel $s \in \mathcal{S}$ up to time $t$.
To bound $N_{\ell,s}^{ER}(t)$, note that according to Section~\ref{ssec: exploration phase}, the total number of samples from the exploration phases in subepochs EE for cell $\ell$ on channel $s$ up to time $t$ is given by:
{\small
\begin{equation} 
T_{\ell,s}^{\mathrm{EE}}(t) = \sum \limits _{n=1}^{N_{\ell,s}^{ER}(t)} 4^{n-1} = \frac {1}{3} (4^{N_{\ell,s}^{ER}(t)}-1).
\end{equation}
}
Since $t > T_1$ (so that (\ref{proof eq 1}) holds) and we are in an exploration phase ((\ref{eq: exploration condition}) holds), we obtain that
$T_{\ell,s}^{\mathrm{EE}}(t) < \mathcal{E}_{\ell, s} \cdot \log (t)$, where $\mathcal{E}_{\ell, s}$ is defined in (\ref{eq: {E}_l,s}).
Combining these results yields:
\begin{equation} 
\label{proof ssec exploration eq 1}
N_{\ell,s}^{ER}(t) \leq \lfloor \log _{4}(3\mathcal{E}_{\ell, s}\log (t)+1) \rfloor +1.
\end{equation}
Next, we use this bound together with the maximum hitting time of cell $\ell$ on channel $s$, to bound the total time spent by cell $\ell$ exploring channel $s$, including both RE and EE, denoted by $N_{\ell,s}^{(O)}(t)$:
{\small
\begin{align} 
\label{proof ssec total exploration time}
E[N_{\ell, s}^{(O)}(t)]\leq&\sum \limits _{n=0}^{N_{\ell,s}^{ER}(t)-1}(4^{n}+M^{\ell,s}_{\max }) 
\notag \\
=&\frac {1}{3}(4^{N_{\ell,s}^{ER}(t)}-1)+ M^{\ell,s}_{max} \cdot N_{\ell,s}^{ER}(t) 
\notag \\
\leq&\frac {1}{3} [4(3\mathcal{E}_{\ell,s}\cdot \log (t)+1)-1] 
\notag \\
&+\, M^{\ell,s}_{\max } \cdot \left( \lfloor \log _{4}(3\mathcal{E}_{\ell, s}\log (t)+1) \rfloor +1 \right).
\end{align}
}
We now bound the regret caused by channel switching.
For that, we use the following lemma:

\textit{Lemma 3:
\cite{anantharam2003asymptotically} Consider an irreducible, aperiodic Markov chain with state space $X$, a matrix of transition probabilities $P$ , an initial distribution $\overrightarrow {q}$ which is positive in all states, and stationary distribution $\overrightarrow {\pi } (\pi _{x}$ is the stationary probability of state $x$). The state (reward) at time $t$ is denoted by $x(t)$. Let $\mu$ denote the mean reward. If we play the chain for an arbitrary time $T$ , then there exists a value $A_{p}$, such that: $E\left[{\sum \limits _{t=1}^{T}x(t)-\mu T}\right] \leq A_{p}$.}

\vspace{0.1cm}
\noindent
And from \cite{anantharam2003asymptotically}, $A_{p} \leq (\min _{x \in S}\pi _{x})^{-}1 \sum \limits _{x \in S} x$.

Lemma~3 establishes an upper bound on the deviation of a Markov chain from its stationary distribution, a phenomenon we refer to as the transient effect.
Since each cell and channel pair has its own independent exploration phases, no channel switching occurs within a specific exploration phase (see Section~\ref{ssec: exploration phase}).
Therefore, by applying Lemma~3, we can bound the regret due to channel switching (\ref{proof eq 4}) during the exploration phase by:
{\small
\begin{equation} 
\label{proof ssec exploration eq 2}
Q_{\max } \cdot \left({\sum \limits _{\ell=1}^{L} \sum \limits _{s=1}^{S} (\lfloor \log _{4}(3\mathcal{E}_{\ell, s}\log (t)+1) \rfloor +1) }\right).
\end{equation}
}
Next, we bound the regret caused by sub-optimal allocation (\ref{proof eq 5}), during the exploration phase.
When cell $\ell$ explores channel $s$, it contributes to the regret in two ways:
1.  {cell $\ell$ does not transmit on its stable channel, increasing the regret by $\mu _{\ell,P(\ell)}-\mu _{\ell, s}$; and
2.  if $s$ is the stable channel of a neighbor $q$, then due to a collision, the regret further increases by $\mu _{q, s}$.
Combining these two effects, we can bound (\ref{proof eq 5}) in the exploration phases by:
{\small
\begin{equation}
\label{proof ssec exploration eq 3}
\sum \limits _{\ell=1}^{L} \sum \limits _{s=1}^{S} \bigg (E[N_{\ell,s}^{(O)}(t)] \cdot (\mu _{\ell,P(\ell)} + \! \! \! \! \sum_{q \in P^{-1}(s) \cap \mathcal{N}_\ell } \! \! \! \! \left[ \mu _{q,s} \right] - \mu _{\ell,s}) \bigg),
\end{equation}
}
where $P^{-1}(s)$ denotes the set of cells for which channel $s$ is their stable channel.
By substituting (\ref{proof ssec total exploration time}) we can upper-bound (\ref{proof ssec exploration eq 3}).
Finally, combining (\ref{proof ssec exploration eq 2}) and (\ref{proof ssec exploration eq 3}), we obtain a bound on the first term in (\ref{proof regrets sum}):
{\small
\begin{align} 
\mathcal {R}^{ER}(t)\leq&Q_{\max } \cdot \left({\sum \limits _{\ell=1}^{L} \sum \limits _{s=1}^{S} (\lfloor \log _{4}(3\mathcal{E}_{\ell,s}\log (t)+1) \rfloor +1) }\right) 
\notag \\
&+\, \sum \limits _{\ell=1}^{L} \sum \limits _{s=1}^{S} \bigg (\! (4\mathcal{E}_{\ell, s} \cdot \log (t) + 1 
    \notag \\ 
    &+\, M_{\max }^{\ell, s}\big (\lfloor \log _{4}(3\mathcal{E}_{\ell, s}\log (t)+1) \rfloor +1 \big) \bigg)  \cdot  
\notag \\ & \qquad \qquad \quad \; \bigg(\mu _{\ell,P(\ell)}+ \! \! \! \! \sum_{q \in P^{-1}(s) \cap \mathcal{N}_\ell } \! \! \! \! \left[ \mu _{q,s} \right] \!\! -\!\! \mu _{\ell,s} \! \bigg), 
\end{align}
}
which corresponds to the first and second terms on the right-hand side of (\ref{eq: regret bound}).

\subsection{Regret in the Allocation Phases:}
First, we bound the total time spent in allocation phases up to time $t$, denoted by $T_A(t)$.

Since an allocation phase takes place (if at all) only after an exploration phase, the number of allocation phases up to time $t$, denoted by $N^A(t)$, is bounded above by the total number of exploration phases (by all cells) up to time $t$; that is,
{\small
\begin{align} 
\label{proof ssec allocation eq 1}
N^{A}(t) &\leq\sum \limits _{\ell=1}^{L} \sum \limits _{s=1}^{S} N_{\ell,s}^{ER}(t) 
\sum \limits _{s=1}^{S} \lfloor \log _{4}(3\mathcal{E}_{\ell,s}\log (t)+1) \rfloor +1 ,
\end{align}
}
where the last inequality follows from using (\ref{proof ssec exploration eq 1}).
To bound the number of time indices that each allocation phase takes we formulate and prove the following lemma:

\textit{Lemma 4:}
The number of time indices required to complete each allocation phase, as formulated in Section~\ref{ssec: allocation phase}, is upper bounded by $2L \cdot S$
\begin{proof}
    Note that the number of iterations required to reach a stable allocation is $L\cdot S$, since each cell needs at most $S$ iterations to be matched to a channel. Moreover, since each iteration contains (at most) two sub-epochs, S1 and S2, it takes 2 time indices. Therefore, the total number of time indices that the allocation phase takes is at most $2L\cdot S$.
\end{proof}
Therefore, the duration of the allocation phase is finite with respect to $t$ and depends only on the number of cells and channels.
Combining (\ref{proof ssec allocation eq 1}) with Lemma 4, the total time spent in allocation phases up to time $t$ is bounded as follows:
{\small
\begin{align} 
E[T_{A}(t)]\leq& 2LS\left({\sum \limits _{\ell=1}^{L} \sum \limits _{s=1}^{S} \lfloor \log _{4}(3\mathcal{E}_{\ell,s}\log (t)+1) \rfloor +1 }\right).
\end{align}
}
We now proceed to bound the contribution to the regret from the allocation phase. In each allocation phase, the number of channels switching is at most $2L\cdot S$. Therefore, the regret due to the transient effect (\ref{proof eq 4}) is bounded by
{\small
\begin{align}
\label{proof ssec allocation eq 2}
Q_{\max } \cdot 2LS \cdot \left({\sum \limits _{\ell=1}^{L} \sum \limits _{s=1}^{S} (\lfloor \log _{4}(3\mathcal{E}_{\ell,s}\log (t)+1) \rfloor +1) }\right). 
\end{align}
}
While the regret resulting from sub-optimal allocation (\ref{proof eq 5}) can be bounded by:
{\small
\begin{align} 
\label{proof ssec allocation eq 3}
E[T_{A}(t)] &\cdot \left({\sum \limits _{\ell=1}^{L} \mu _{\ell,P(\ell)} }\right)\leq \bigg [2LS \cdot \bigg (\sum \limits _{\ell=1}^{L} \sum \limits _{s=1}^{S} 
\notag \\
&\lfloor \log _{4}(3\mathcal{E}_{\ell,s}\log (t)+1) \rfloor +1 \bigg) \bigg] \cdot \left({\sum \limits _{\ell=1}^{L} \mu _{\ell,P(\ell)} }\right).
\end{align}
}
Using (\ref{proof ssec allocation eq 2}), (\ref{proof ssec allocation eq 3}) we conclude:
{\small
\begin{align} 
\mathcal {R}^{A}(t)\leq&2Q_{\max } \cdot LS \cdot \bigg (\sum \limits _{\ell=1}^{L} \sum \limits _{s=1}^{S} 
(\lfloor \log _{4}(3\mathcal{E}_{\ell,s}\log (t) +1) \rfloor +1) \bigg)\notag \\
& 
\hspace{-1.3cm}+\, \bigg [2LS \cdot \bigg (\sum \limits _{\ell=1}^{L} \sum \limits _{s=1}^{S} 
\lfloor \log _{4}(3\mathcal{E}_{\ell,s}\log (t) +1) \rfloor+1 \bigg) \bigg] \cdot \left({\sum \limits _{\ell=1}^{L} \mu _{\ell,P(\ell)} }\right), 
\end{align}
}
which matches the third and fourth terms on the RHS of (\ref{eq: regret bound}).

\subsection{Regret in the Exploitation Phases:}
As described in Section~\ref{ssec: exploitation phase}, the duration of the $n^{th}$ exploitation phase is $2\cdot 4^{(n-1)}$. Thus, the total time spent during the exploitation phase is:
{\small
\begin{equation} 
\sum \limits _{n=1}^{N^{EI}(t)} 2 \cdot 4^{n-1} = \frac {2}{3} (4^{N^{EI}(t)}-1) \leq t.
\end{equation}
}
Thus, we can bound the number of exploitation phases up to time $t$ by
{\small
\begin{equation} 
\label{proof ssec explitaion eq 11}
N^{EI}(t) \leq \lceil \log _{4}\left({\frac {3}{2}t+1}\right) \rceil.
\end{equation}
}
First, we upper-bound the regret caused by the transient effect during the exploitation phases. In each exploitation phase, there are no channel switchings, as each cell exploits its allocated channel.
We denote by $N^{EI}{\ell, s}(t)$ the total number of exploitation phases in which cell $\ell$ was allocated to channel $s$ up to time t. Notice that $\sum_{s=1}^{S} N^{EI}_{\ell, s}(t) = N^{EI}(t)$. Therefore using Lemma~3, (\ref{proof eq 4}) is bounded by
{\small
\begin{align}
\label{proof ssec explitaion eq 10}
Q_{\max}\sum_{\ell=1}^L \sum_{s=1}^S N^{EI}_{\ell, s}(t) =& L\cdot Q_{\max } \cdot N^{EI}(t) \notag \\  \leq& L \cdot  Q_{\max } \cdot \lceil \log _{4}\left({\frac {3}{2}t+1}\right) \rceil.
\end{align}
}
As for the regret caused by not playing the stable allocation (which we refer to as a sub-optimal) during the exploitation phases, we denote by $t_n$ the starting time of the $n^{th}$ exploitation phase, by $T_s(t)$ the total time spent in sub-optimal exploitation phases, and by $P_s(n)$ the probability that the $n^{th}$ exploitation phase is sub-optimal. Using these notations, 
{\small
\begin{equation}
\label{proof ssec explitaion eq 1}
\begin{array}{l}
E[T_s(t)]=\sum \limits _{n=1}^{N^{EI}(t)} 2 \cdot 4^{n-1} \cdot P_{s}(n) 
\\
\leq
\sum \limits _{n=1}^{\lceil \log _{4}\left({\frac {3}{2}t+1}\right) \rceil } 2 \cdot 4^{n-1} \cdot P_{s}(n) 
\leq\sum \limits _{n=1}^{\lceil \log _{4}\left({\frac {3}{2}t+1}\right) \rceil } 2t_{n} \cdot P_{s}(n). 
\end{array}
\end{equation}
}
To bound the second term of the regret, and thereby complete the proof of Theorem~\ref{theorem 1}, it remains to show that:
\begin{equation} 
\label{proof ssec explitaion eq 7}
P_{s}(n) \leq B \cdot t_{n}^{-1}.
\end{equation}
A sub-optimal exploitation phase may occur if the preceding allocation phase results in an incorrect allocation. This, in turn, can happen due to one of the following reasons. First, cell $\ell$ may fail to correctly identify the order of its $D_{\ell} + 1$ best channels before entering the allocation phase; we denote this event by $Y_{\ell}$. Second, a cell with a higher expected rate on channel $s$ than its neighbor may not be correctly identified entering the allocation phase; this event is denoted by $Z_s$. These events are defined explicitly as follows:
{\small
\begin{align} 
Y_{\ell}(t_{n})=&\bigcup \limits _{s \in \mathcal {S}_{\ell}} \bigcup \limits _{p \in \mathcal {S}} \big \{\hat {r}_{\ell,s}(t_{n})< \hat {r}_{\ell,p}(t_{n}) | \mu _{\ell,s}>\mu _{\ell,p} \big \} 
 \\ 
Z_{s}(t_{n})=&\bigcup \limits _{{q \in \mathcal {L}}}\big \{\hat {r}_{\ell,s}(t_{n})< \hat {r}_{q,s}(t_{n})|\mu _{\ell,s} = \max _{p \in \mathcal {V}_{\ell, s}} \mu _{p,s} \;, \notag \\ & \; \, \qquad  \mu_{l,s} > \mu_{q,s} \big \},
\end{align}
}
Based on these events, the probability $P_s(n)$ is given by
\begin{equation} 
P_{s}(n) \triangleq \Pr \big (\bigcup \limits _{\ell \in \mathcal {L}} Y_{\ell}(t_{n}) { \text { or }} \bigcup \limits _{s \in \mathcal {S}} Z_{s}(t_{n}) \big).
\end{equation}
Using the union bound we have: 
{\small
\begin{align}
\label{proof ssec explitaion eq 2}
&\hspace {-1.6pc}\Pr \big (\bigcup \limits _{\ell \in \mathcal {L}} Y_{\ell}(t_{n}) { \text { or }} \bigcup \limits _{s \in \mathcal {S}} Z_{s}(t_{n}) \big) 
\notag \\
\leq&L S \cdot (\max_{\ell \in \mathcal{L}} D_{\ell}) \cdot \Pr \big (\hat {r}_{\ell,s}(t_{n})< \hat {r}_{\ell,p}(t_{n}) | \mu _{\ell,s}>\mu _{\ell,p} \big) 
\\
\label{proof ssec explitaion eq 3}
&+\, LS~\cdot \Pr \big (\hat {r}_{\ell,s}(t_{n})< \hat {r}_{q,s}(t_{n})|\mu _{\ell,s} > \mu _{q,s} \big). 
\end{align}
}
In order to bound (\ref{proof ssec explitaion eq 2}) and (\ref{proof ssec explitaion eq 3}), we first define $C_{t,v} = \sqrt{\kappa \log(t) / v}$. From (\ref{proof ssec explitaion eq 2}), it follows that at least one of the following conditions must hold:
{\small
\begin{align} 
\label{proof ssec explitaion eq 4}
\hat {r}_{\ell,s}(t_{n})\leq&\mu _{\ell,s}- C_{t_{n},T_{\ell,s}^{EE}(t)} 
\\ 
\label{proof ssec explitaion eq 5}
\hat {r}_{\ell,p}(t_{n})\geq&\mu _{\ell,p}+ C_{t_{n},T_{\ell,p}^{EE}(t)} 
\\ 
\label{proof ssec explitaion eq 6}
\mu _{\ell,s}< &\mu _{\ell,p}+ C_{t_{n},T_{\ell,p}^{EE}(t)}+ C_{t_{n},T_{\ell,s}^{EE}(t)}. 
\end{align}
}
Next, we show that the probability of event (\ref{proof ssec explitaion eq 6}) is zero.
{\small
\begin{align}
&\hspace {-1.3pc}\Pr \big (\mu _{\ell,s} < \mu _{\ell,p}+ C_{t_{n},T_{\ell,p}^{EE}(t_n)}+ C_{t_{n},T_{\ell,s}^{EE}(t_n)} \big) 
\notag \\
=&\Pr \left({\mu _{\ell,s} - \mu _{\ell,p} < \sqrt {\frac {\kappa \log t_{n}}{T_{\ell,p}^{EE}(t_n)}}+\sqrt {\frac {\kappa \log t_{n}}{T_{\ell,s}^{EE}(t_n)}} }\right) 
\notag \\
\leq&\Pr \left({\mu _{\ell,s} - \mu _{\ell,p} < 2\sqrt {\frac {\kappa \log t_{n}}{\min \left \{{T_{\ell,s}^{EE}(t_n), T_{\ell,p}^{EE}(t_n)}\right \}}} }\right) 
\notag \\
\leq&\Pr \left({\min \left \{{T_{\ell,s}^{EE}(t_n), T_{\ell,p}^{EE}(t_n)}\right \} < \frac {4\kappa}{(\mu _{\ell,s} - \mu _{\ell,p})^{2}}\log (t_{n}) }\right).
\end{align}
}
Combining (\ref{proof eq 1}) with the reverse of (\ref{eq: exploration condition}) (which holds since we started an allocation phase), also with the fact that $E_{\ell,s} \geq E_{\ell,s}^{(R)}$ and get:
{\small
\begin{align} 
T_{\ell,s}^{EE}(t_n)>&\frac {4\kappa}{\min \limits_{r \neq s} \{ (\mu _{\ell,s}-\mu _{\ell,r })^{2}\}}\log (t_{n}) 
\notag \\
\geq&\frac {4\kappa}{(\mu _{\ell,s} - \mu _{\ell,p})^{2}} \log (t_{n})
\end{align}
}
If $s \in S_{\ell}$:\vspace{-0.8em}
{\small
\begin{align}
T_{\ell,p}^{EE}(t_n)>&\frac {4\kappa}{\min \limits_{r \neq p } \{ (\mu _{\ell,p}-\mu _{\ell,r})^{2} \}}\log (t_{n}) \notag \\ \geq&\frac {4\kappa}{(\mu _{\ell,s} - \mu _{\ell,p})^{2}} \log (t_{n}),
\end{align}
}
otherwise,\vspace{-0.8em}
{\small
\begin{align}
T_{\ell,p}^{EE}(t_n)>&\frac {4\kappa}{(\mu_{\ell,p}-\min\limits_{r \in S_{\ell}} \mu _{\ell, r})^2}\log (t_{n}) \notag \\ \geq&\frac {4\kappa}{(\mu _{\ell,s} - \mu _{\ell,p})^{2}} \log (t_{n}).
\end{align}
}
Hence, the probability of event (\ref{proof ssec explitaion eq 6}) is zero. 

Using Lezaud’s result (Lemma 3), we now bound (\ref{proof ssec explitaion eq 4}) and (\ref{proof ssec explitaion eq 5}). By following steps similar to those used above to bound (\ref{lemma 1 proof: eq 3}), and using (\ref{I_kappa}), we obtain:
{\small
\begin{align} 
\Pr \big (\hat {r}_{\ell,s}(t_{n})\leq&\mu _{\ell,s}- C_{t_{n},v_{\ell,s}} \big) 
\notag \\
\leq&\frac {|\mathcal {R}^{\ell,s}|}{\pi _{\min }} t^{- \frac {\kappa \bar {\lambda }_{\min }}{28 \mathcal{C}_{\max }^{2} \overline{R}_{\max }^{2} \hat {\pi }_{\max }^{2} } } 
 = \frac {|\mathcal {R}^{\ell,s}|}{\pi _{\min }} \cdot t^{-1}
 \\
\Pr \big (\hat {r}_{\ell,p}(t_{n})\geq&\mu _{\ell,p}+ C_{t_{n},v_{\ell,p}} \big) 
\notag \\
\leq&\frac {|\mathcal {R}^{\ell,p}|}{\pi _{\min }} t^{- \frac {\kappa \bar {\lambda }_{\min }}{28 \mathcal{C}_{\max }^{2} \overline{R}_{\max }^{2} \hat {\pi }_{\max }^{2} }}
 = \frac {|\mathcal {R}^{\ell,s}|}{\pi _{\min }} \cdot t^{-1}
. \end{align}
}
Therefore (\ref{proof ssec explitaion eq 2}) is bounded by:
{\small
\begin{align}
L S \cdot (\max_{\ell \in \mathcal{L}} D_{\ell}) \cdot \frac {2\mathcal{C}_{\max }}{\pi _{\min }} \cdot t^{-1}.
\end{align}
}
By applying similar arguments, (\ref{proof ssec explitaion eq 3}) can be bounded, this time leveraging the fact that $E_{\ell,s} \geq E_{\ell,s}^{(C)}$. Consequently, Equation (\ref{proof ssec explitaion eq 7}) can be bounded:
{\small
\begin{align} 
\label{proof ssec explitaion eq 8}
\Pr \big (\bigcup \limits _{\ell \in \mathcal {L}} Y_{\ell}(t_{n}) & { \text { or }} \bigcup \limits _{s \in \mathcal {S}} Z_{s}(t_{n}) \big) \!\! \notag\\ &\leq (L S \cdot (\max_{\ell \in \mathcal{L}} D_{\ell})\!+\!LS) \frac {2\mathcal{C}_{\max }}{\pi _{\min }} \cdot t^{-1} 
\end{align}
}
Using (\ref{proof ssec explitaion eq 8}), we can bound (\ref{proof ssec explitaion eq 1}). Hence, the regret due to sub-optimal exploitation phases is bounded by:
{\small
\begin{equation} 
\label{proof ssec explitaion eq 9}
2 \left({\sum \limits _{\ell=1}^{M} \mu _{\ell,P(\ell)}}\right) (L S \cdot (\max_{\ell \in \mathcal{L}} D_{\ell})+LS) \frac {2\mathcal{C}_{\max }}{\pi _{\min }} \cdot \lceil \log _{4}\left({\frac {3}{2}t+1}\right) \rceil.
\end{equation}
}
By combining the bounds of the two regret components, (\ref{proof ssec explitaion eq 9}) and (\ref{proof ssec explitaion eq 10}), the total regret caused by the exploitation phases is given by:
{\small
\begin{align} 
\mathcal {R}^{EI}(t)\leq& L \cdot Q_{\max } \cdot \lceil \log _{4}\left({\frac {3}{2}t+1}\right) \rceil + 2 \left({\sum \limits _{\ell=1}^{L} \mu _{\ell,P(\ell)}}\right) 
\notag \\
&(L S \cdot (\max_{\ell \in \mathcal{L}} D_{\ell})+LS) \frac {2\mathcal{C}_{\max }}{\pi _{\min }} \cdot \lceil \log _{4}\left({\frac {3}{2}t+1}\right) \rceil, 
\end{align}
}
which coincides with the two last terms on the RHS of (\ref{eq: regret bound}).

\bibliographystyle{ieeetr}
\bibliography{Di_Co_SPP_bib}

\end{document}